\documentclass{article}

\usepackage[preprint, nonatbib] {neurips_2026}  
\usepackage[numbers, sort&compress]{natbib}      

\usepackage[utf8]{inputenc}
\usepackage[T1]{fontenc}
\usepackage{hyperref}
\usepackage{url}
\usepackage{booktabs}
\usepackage{amsfonts}
\usepackage{amssymb}
\usepackage{amsmath}
\usepackage{nicefrac}
\usepackage{microtype}
\usepackage{xcolor}
\usepackage{graphicx}
\usepackage{algorithm}
\usepackage{algorithmic}
\usepackage{float}  
\usepackage{multirow}
\usepackage{makecell}
\usepackage{subcaption}
\usepackage{wrapfig}
\usepackage{colortbl}
\usepackage{enumitem}
\usepackage{amsthm}
\usepackage{fontawesome5}  

\newtheorem{proposition}{Proposition}
\newtheorem{lemma}{Lemma}

\newtheorem{definition}{Definition}
\newtheorem{remark}{Remark}

\makeatletter
\renewcommand\paragraph{%
  \@startsection{paragraph}{4}{\z@}%
    {1.5ex \@plus 0.5ex \@minus 0.2ex}%
    {-0.5em}%
    {\normalfont\normalsize\bfseries}}
\renewcommand\section{%
  \@startsection{section}{1}{\z@}%
    {2.0ex \@plus 0.5ex \@minus 0.2ex}%
    {1.0ex \@plus 0.2ex}%
    {\normalfont\Large\bfseries}}
\renewcommand\subsection{%
  \@startsection{subsection}{2}{\z@}%
    {1.5ex \@plus 0.4ex \@minus 0.2ex}%
    {0.6ex \@plus 0.2ex}%
    {\normalfont\large\bfseries}}
\makeatother

\newcommand{\ours}{\textsc{MAD-OPD}}

\newcommand{\jsd}{\mathrm{JSD}}
\newcommand{\kl}{\mathrm{KL}}
\newcommand{\E}{\mathbb{E}}

\title{MAD-OPD: Breaking the Ceiling in On-Policy Distillation via Multi-Agent Debate}

\author{%
  Jianze Wang$^{1,2,*}$, \quad
  Ying Liu$^{2}$, \quad
  Jinlong Chen$^{2}$, \quad
  Xuchun Hu$^{2}$, \quad
  Qilong Zhang$^{2}$, \quad
  \\[3pt]
  \bfseries
  Yu Cao$^{2}$, \quad
  Jun Wang$^{2}$, \quad
  Hua Yang$^{2}$, \quad
  Yong Xie$^{1,\dagger}$, \quad
  Qianglong Chen$^{2,\dagger}$
  \\[3pt]
  \mdseries
  $^{1}$School of Artificial Intelligence and Automation, Huazhong University of Science and Technology
  \\[1pt]
  $^{2}$Alibaba Group
  \\[1pt]
  \faEnvelope\;\texttt{swordwang@hust.edu.cn},\;
  \texttt{qianglong.cql@alibaba-inc.com}
}

\begin{document}

\maketitle

\renewcommand{\thefootnote}{\fnsymbol{footnote}}
\footnotetext[1]{Work done during research internship at Alibaba Group.}
\footnotetext[2]{Corresponding author.}
\renewcommand{\thefootnote}{\arabic{footnote}}
\setcounter{footnote}{0}

\begin{abstract}
On-policy distillation (OPD) trains a student on its own trajectories under token-level teacher supervision, but existing methods are capped by a \emph{single-teacher capability ceiling}: when the teacher errs, the student inherits the error. OPD also remains largely unexplored in agentic tasks, where per-step errors compound across long trajectories and destabilize training. We propose \textbf{\ours{}} (\textbf{M}ulti-\textbf{A}gent \textbf{D}ebate-driven \textbf{O}n-\textbf{P}olicy \textbf{D}istillation), which breaks this ceiling by recasting the distillation teacher as a deliberative collective of teachers that debate over the student's on-policy state; the debate produces an \emph{emergent collective intelligence} that supplies token-level supervision, with each teacher's contribution weighted by its post-debate confidence. To extend OPD to agentic tasks, we also introduce On-Policy Agentic Distillation (OPAD), which adds step-level sampling to stabilize training under multi-step error compounding. We additionally derive a task-adaptive divergence principle, selecting JSD (Jensen--Shannon divergence) for agentic stability and reverse KL (Kullback--Leibler) divergence for code generation, and verify it both theoretically and empirically. Across six teacher--student configurations (Qwen3 and Qwen3.5; 1.7B--14B students, 8B--32B teachers) and five agentic and code benchmarks, \ours{} ranks first across all six configurations; on the 14B+8B$\to$4B setting it lifts the agentic average by $+2.4\%$ and the code average by $+3.7\%$ over the stronger single-teacher OPD. Code is available at: \faGithub\; \url{https://github.com/chiefovoavicii/MAD-OPD}.
\end{abstract}

\section{Introduction}
\label{sec:intro}

\begin{figure}[t]
  \centering
  \includegraphics[width=0.85\linewidth]{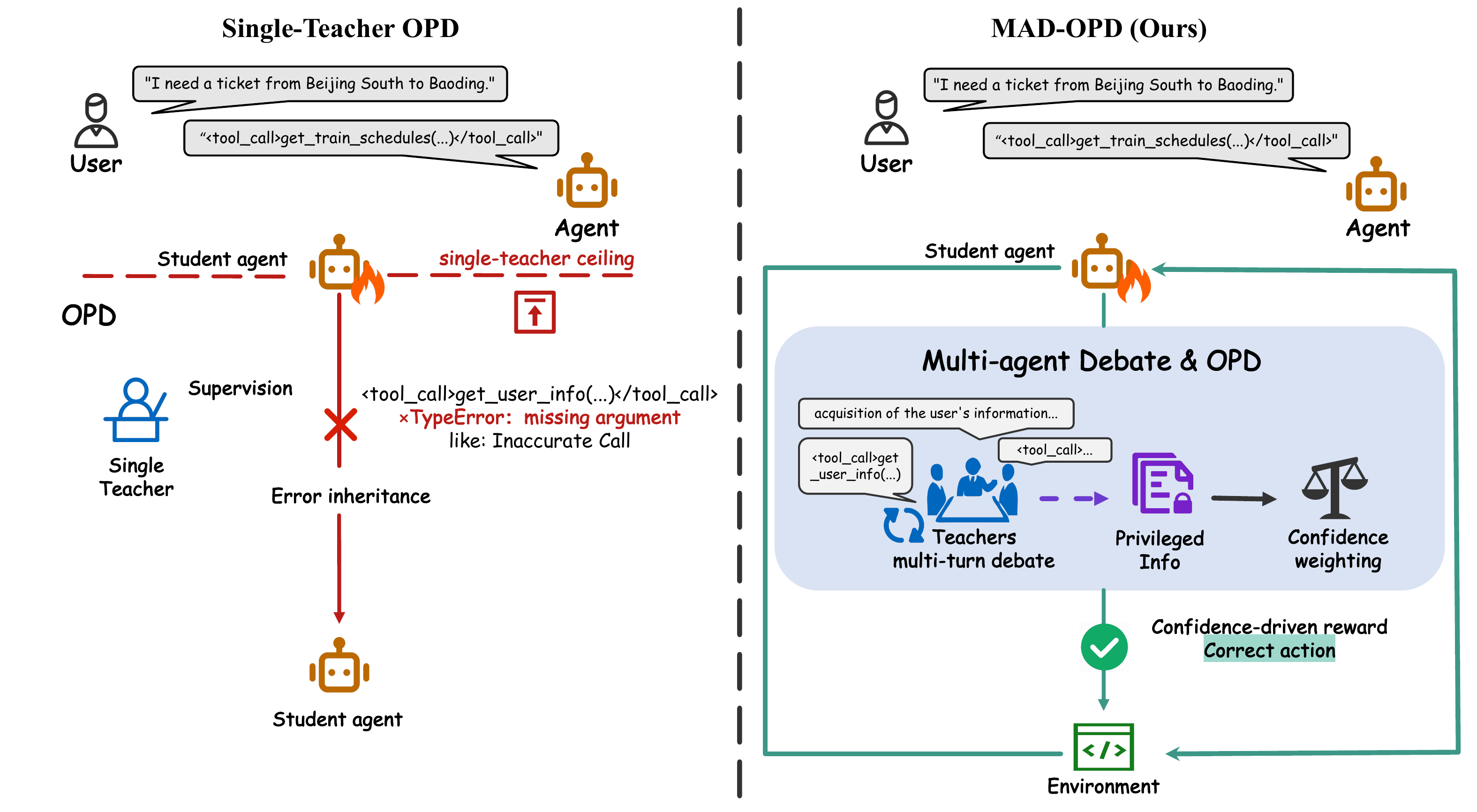}
  \caption{\textbf{Left:} a single teacher's erroneous tool call is faithfully inherited by the student (the \emph{single-teacher capability ceiling}). \textbf{Right:} \ours{}'s multi-round debate corrects each teacher's blind spots, producing supervision that outperforms individual teachers. A worked-out instance with the full debate trajectory is in App.~\ref{app:case_study}.}
  \label{fig:hook}
\end{figure}

On-policy distillation (OPD) has rapidly become a standard post-training recipe for modern LLMs~\citep{lu2025opd}, adopted in the strong-to-weak distillation of Qwen3~\citep{yang2025qwen3} and the multi-teacher OPD post-training of DeepSeek-V4~\citep{deepseekv4}; it offers dense on-policy supervision as an efficient alternative to outcome-reward reinforcement learning (RL)~\citep{shao2024deepseekmath} and off-policy sequence-level distillation~\citep{kim2016sequence}. Recent work extends OPD with privileged information~\citep{zhao2026opsd}, reward extrapolation~\citep{yang2026gopd}, verifiable rewards~\citep{yang2026rlsd}, and context internalization~\citep{ye2026opcd}; see~\citet{song2026survey} for a survey. However, three limitations remain. \textbf{(L1)~Single-teacher capability ceiling.} Existing OPD methods distill from a single teacher, upper-bounding the student by that teacher's capability; diagnostic work further shows that even a stronger teacher can fail to lift the student~\citep{li2026rethinkopd}. \textbf{(L2)~Narrow task coverage.} Current OPD research concentrates on math and commonsense reasoning~\citep{gu2024minillm,agarwal2024opd}, leaving \emph{agentic} domains~\citep{xi2023rise} (multi-step tool use with environment feedback) largely unexplored: per-step errors compound across long trajectories~\citep{li2026rethinkopd,fu2026revisitopd}. \textbf{(L3)~Ad-hoc divergence choice.} OPD is mathematically equivalent to dense token-level RL in which the divergence $D$ directly shapes the reward signal~\citep{yang2026gopd}; yet recent attempts to stabilize this objective~\citep{jang2026stable,jin2026eopd,fu2026revisitopd} each patch one failure mode without a principled rule linking $D$ to task structure, leaving divergence a task-agnostic hyperparameter.

Resolving these limitations requires looking beyond a single model. Multi-teacher distillation~\citep{chen2024magdi} and broader multi-agent systems~\citep{chen2025rag} show that collaboration among LLMs can solve problems no individual agent can; in particular, multi-agent debate (MAD)~\citep{du2023improving} produces an \emph{emergent collective intelligence} through iterative argumentation, and \citet{chen2024reconcile} demonstrates that confidence-weighted consensus among diverse LLMs can match or exceed the strongest individual model. Yet existing multi-teacher distillation remains \emph{off-policy}: the student passively consumes pre-computed teacher signals rather than benefiting from live teacher reactions to its own rollouts.

We propose \ours{} (\textbf{M}ulti-\textbf{A}gent \textbf{D}ebate-driven \textbf{O}n-\textbf{P}olicy \textbf{D}istillation; Figure~\ref{fig:hook}), the first framework to bring multi-agent debate into the OPD training loop as a source of debate-conditioned teacher distributions, transferring MAD's emergent collective intelligence to the student and breaking the single-teacher capability ceiling (\textbf{L1}). At every decision point, multiple teachers debate over the student's on-policy state, and the debate transcript provides token-level supervision with each teacher's contribution adaptively weighted by its post-debate confidence. To address \textbf{L2}, we propose \textit{On-Policy Agentic Distillation} (OPAD), a distillation method dedicated to agentic tasks that brings step-level environment interaction to OPD and is compatible with both single-teacher OPD and our \ours{}. To address \textbf{L3}, we cast divergence selection as reward design and derive a task-adaptive divergence principle: bounded JSD for agentic stability and reverse KL for coherent code generation, both verified theoretically and empirically (Sec.~\ref{sec:ablation}).

\vspace{2pt}
\noindent Our contributions are summarized as follows:
\begin{itemize}[leftmargin=1.5em,itemsep=1pt,topsep=2pt]
    \item \textbf{Task-adaptive divergence principle.} Through theoretical analysis we derive a task-adaptive principle that selects JSD for agentic OPD and reverse KL for code generation.
    \item \textbf{Multi-Agent Debate-driven OPD framework.} We introduce \ours{}, the first framework to use multi-agent debate as the source of token-level on-policy supervision, transferring MAD's emergent collective intelligence to the student and breaking the single-teacher capability ceiling.
    \item \textbf{OPAD for agentic tasks.} We propose \textit{On-Policy Agentic Distillation} (OPAD), which uses step-level sampling to stabilize OPD training under multi-step error compounding, extending OPD to an agentic regime rarely addressed by prior work.
\end{itemize}

\section{Related Work}
\label{sec:related}

\paragraph{On-Policy Distillation.}
Knowledge distillation~\citep{hinton2015distilling} transfers a teacher's soft predictions to a smaller student. \citet{agarwal2024opd, tan2023gkd} introduced OPD as a dense on-policy alternative to off-policy sequence-level distillation~\citep{kim2016sequence}, and OPD has since been adopted in frontier-model post-training~\citep{yang2025qwen3,deepseekv4} and surveyed by~\citet{song2026survey}. Recent diagnostic work investigates when OPD fails~\citep{li2026rethinkopd,fu2026revisitopd}, and \citet{jang2026stable} and \citet{jin2026eopd} propose objective-level reformulations to stabilize the reverse-KL objective. \citet{penaloza2026pi} extend OPD to leverage \emph{privileged information} that is visible only to a single teacher at training time. These methods all rely on a single teacher with task-agnostic divergence; \ours{} instead provides debate-generated privileged information and a task-adaptive divergence.

\paragraph{Multi-Agent Systems.}
Multi-agent systems (MAS) coordinate specialized agents to solve complex tasks; Chain-of-Agents~\citep{li2025coa} distills an entire MAS pipeline into a single foundation model. The MAS instantiation closest to our setting is multi-agent debate (MAD), where iterative argumentation among agents yields an \emph{emergent collective intelligence} that improves reasoning beyond any individual agent's capability~\citep{du2023improving,liang2024encouraging,chen2024reconcile,choi2025debate}. Existing MAD methods, however, consume debate outputs only at inference time; \ours{} brings debate inside the on-policy training loop as token-level supervision.

\paragraph{Multi-Teacher Distillation.}
Multi-teacher distillation pools complementary signals from several teachers. Classical recipes aggregate teacher outputs with adaptive per-input weighting~\citep{kim2016sequence,chen2023amtss}, while LLM-era variants distill multi-agent interaction graphs or debate traces into the student~\citep{chen2024magdi,wang2025smagdi,zhang2025mot,zhou2025dr}. These methods all lack inter-teacher interaction: teachers produce outputs independently and the student learns from a fixed aggregate. \ours{} adds on-policy inter-teacher debate as the supervision source.

\section{Preliminaries and Divergence Analysis}
\label{sec:prelim}

\subsection{On-Policy Distillation as Reward Design}
\label{sec:opd_rl}

\paragraph{OPD as dense token-level RL.}
Given a teacher $\pi^*$ and a student $\pi_\theta$ with parameters $\theta$, OPD minimizes a divergence along the student's own trajectories~\citep{agarwal2024opd, song2026survey}. For input $x$ from data distribution $\mathcal{D}$ and $\hat{y} \sim \pi_\theta(\cdot | x)$:
\begin{equation}
\label{eq:opd}
\mathcal{L}_{\textsc{opd}}(\theta) = \E_{x,\,\hat{y} \sim \pi_\theta} \left[ \frac{1}{|\hat{y}|} \sum_{t=1}^{|\hat{y}|} D\big(\pi^*(\cdot | x, c, \hat{y}_{<t}) \;\big\|\; \pi_\theta(\cdot | x, \hat{y}_{<t})\big) \right],
\end{equation}
where $D$ is a divergence and $c$ denotes \emph{privileged information} available only to the teacher~\citep{zhao2026opsd}; in our setting, $c$ is the multi-agent debate transcript. \citet{lu2025opd, yang2026gopd, hubotter2026sdpo} show that Eq.~\ref{eq:opd} corresponds to a dense token-level RL surrogate under teacher-forcing (App.~\ref{app:opd_grad}), where state $s_t = (x, \hat{y}_{<t})$, action $a_t = \hat{y}_t$, and per-token reward $r_D(s_t) \triangleq -D(\pi^*(\cdot|x,c,\hat{y}_{<t}) \| \pi_\theta(\cdot|x,\hat{y}_{<t}))$. Since this reward is entirely determined by $D$, \textbf{choosing a divergence is choosing the reward function}.

\paragraph{Notation.}
For brevity, when the position $t$ is clear from context we write $p \triangleq \pi^*(\cdot | x, c, \hat{y}_{<t})$ for the privileged teacher distribution and $q \triangleq \pi_\theta(\cdot | x, \hat{y}_{<t})$ for the student distribution over the vocabulary $\mathcal{V}$. The student's pre-softmax logit at coordinate $i$ is $z_i$, so $q(v) = \mathrm{softmax}(z)_v$. All logarithms are natural (base $e$).

\paragraph{The privileged $p$--$q$ gap.}
Privileged distillation creates a structural asymmetry: $p$ conditions on $c$ but $q$ does not, so $p$ and $q$ can disagree sharply at student-visited states. The asymmetry takes two forms: (a) the teacher assigns \emph{near-zero} probability to tokens the student samples ($p(v) \to 0$ while $q(v) > 0$), and (b) the teacher concentrates on \emph{multiple} valid tokens that the student must choose between. The first form dominates multi-step agentic tasks; the second dominates code generation.

\subsection{Theoretical Properties of Divergences}
\label{sec:divergence_theory}

Three divergences are standard candidates for $D$ in OPD:
\begin{align}
\text{Forward KL:} \quad D_{\kl}^{\rightarrow}(p \| q) &= \textstyle\sum_{v} p(v) \log \frac{p(v)}{q(v)}, \label{eq:fwd_kl} \\[2pt]
\text{Reverse KL:} \quad D_{\kl}^{\leftarrow}(p \| q) &\triangleq D_{\kl}(q \| p) = \textstyle\sum_{v} q(v) \log \frac{q(v)}{p(v)}, \label{eq:rev_kl} \\[2pt]
\text{JSD:} \quad \jsd_\beta(p \| q) &= \beta D_{\kl}(p \| m) + (1{-}\beta) D_{\kl}(q \| m), \quad m = \beta p + (1{-}\beta) q. \label{eq:jsd}
\end{align}
The two lemmas below characterize each divergence along two axes: \emph{geometry} (mode-covering vs.\ mode-seeking) and \emph{per-token logit-gradient boundedness}. We use $\beta = 0.5$ throughout: the canonical symmetric form attaining the maximum loss bound $\log 2$ (Lemma~\ref{lem:jsd_bound}.1), under which our gradient analysis is tightest.

\begin{lemma}[JSD: bounded loss and bounded logit gradient~\citep{lin1991divergence}]
\label{lem:jsd_bound}
For any distributions $p, q$ over $\mathcal{V}$ and $\beta \in (0,1)$:
\begin{enumerate}[leftmargin=1.5em,itemsep=0pt]
    \item \textbf{Loss bound:} $\jsd_\beta(p\|q) \in [0,\; H(\beta)]$, where $H(\beta) = -\beta\log\beta - (1{-}\beta)\log(1{-}\beta)$. For $\beta{=}0.5$: $\jsd_{0.5} \in [0,\; \log 2]$.
    \item \textbf{Logit-gradient bound:} for the symmetric case $\beta = 0.5$ used throughout this paper, the gradient with respect to student logits is uniformly bounded, $\|\nabla_z \jsd_{0.5}(p\|q)\|_\infty \leq 2$, \emph{regardless} of the support overlap between $p$ and $q$. The bound holds in the worst case with no independence or distributional assumption, and extends to any $\beta$ bounded away from $\{0, 1\}$ with a $\beta$-dependent constant (App.~\ref{app:proof_lem_jsd}).
\end{enumerate}
\end{lemma}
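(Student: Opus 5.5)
The plan is to prove the two parts in order, reusing the loss bound of part 1 inside the gradient argument of part 2.

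\textbf{Part 1 (loss bound).} Nonnegativity is immediate, since $\jsd_\beta$ is a convex combination of two KL divergences, each $\geq 0$. For the upper bound, use the pointwise domination $m = \beta p + (1-\beta) q \geq \beta p$. Then $p(v)/m(v) \leq 1/\beta$ on the support of $p$, so $D_{\kl}(p\|m) \leq \log(1/\beta)$. Symmetrically, $m \geq (1-\beta) q$ gives $D_{\kl}(q\|m) \leq \log(1/(1-\beta))$. Weighting these by $\beta$ and $1-\beta$ yields exactly $H(\beta)$, and $\beta = 0.5$ gives $\log 2$. I will also record the by-product $D_{\kl}(q\|m) \leq \log 2$ at $\beta = 0.5$, which part 2 needs.

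\textbf{Part 2 (logit gradient), step 1: derivative in $q$.} First differentiate $\jsd_{0.5}$ with respect to $q(v)$, treating $q$ as an unconstrained vector and letting the softmax Jacobian handle the simplex. Writing $m = (p+q)/2$, a short computation should give
\[
\partial \jsd_{0.5}/\partial q(v) = \tfrac12 \log \frac{q(v)}{m(v)} \triangleq g_v .
\]
This is because the terms $-\tfrac14 p/m$, $+\tfrac12$ and $-\tfrac14 q/m$ cancel exactly. I will double-check this cancellation, since everything downstream depends on it.

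\textbf{Step 2: chain rule.} Applying the softmax Jacobian $\partial q(v)/\partial z_i = q(v)(\delta_{vi} - q(i))$ gives
\[
\partial_{z_i} \jsd_{0.5} = q(i)\,g_i - q(i) \sum_v q(v)\, g_v .
\]
The main obstacle is that $g_v$ is unbounded below: as $q(v) \to 0$ with $p(v) > 0$, we get $g_v \to -\infty$. This is precisely the support-mismatch regime in which reverse KL blows up.

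\textbf{Step 3: bounding the two terms.} I would handle them separately.
\begin{itemize}
\item \emph{First term.} Since $p + q \leq 2$, we have $q(i)/m(i) = 2q(i)/(p(i)+q(i)) \in [q(i),\,2]$. Hence $|q(i)\,g_i| \leq \tfrac12 \max\{q(i)\log 2,\ |q(i)\log q(i)|\} \leq \tfrac12 \max\{\log 2,\ 1/e\}$.
\item \emph{Second term.} Do not bound it termwise: summing $|q\log q|$ over $\mathcal{V}$ could grow like $\log|\mathcal{V}|$. Instead, recognize $\sum_v q(v)\,g_v = \tfrac12 D_{\kl}(q\|m)$. By the by-product of part 1, this lies in $[0,\, \tfrac12 \log 2]$, so the second term is at most $\tfrac12 \log 2$ in absolute value.
\end{itemize}
Together these give $\|\nabla_z \jsd_{0.5}\|_\infty \leq \log 2 < 2$, with no assumption on the supports of $p$ and $q$.

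\textbf{General $\beta$.} The same computation should give $g_v = (1-\beta)\log(q(v)/m(v))$. Using $q/m \in [q,\, 1/(1-\beta)]$ and $D_{\kl}(q\|m) \leq \log(1/(1-\beta))$ from part 1, both terms stay bounded by a constant depending only on $\beta$. This constant remains finite for $\beta$ bounded away from $\{0,1\}$.
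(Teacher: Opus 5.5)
Your proposal is correct. Part~1 matches the paper's argument exactly; Part~2 reaches the bound by a different and tighter decomposition.

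The paper splits the gradient as $\beta\nabla_z D_{\kl}(p\|m)+(1-\beta)\nabla_z D_{\kl}(q\|m)$, pushes each KL through the softmax separately, and bounds four pieces by the triangle inequality:
\begin{itemize}
\item a $p$-term, bounded by $\beta(1-\beta)\le\tfrac14$ via weighted AM--GM and Young;
\item $X=q(i)\log\frac{q(i)}{m(i)}-q(i)D_{\kl}(q\|m)$, bounded via a sign case-split plus Part~1;
\item a term $Y$, bounded by $1$.
\end{itemize}
Together these give $\tfrac14+\tfrac{2}{\sqrt e}\approx 1.46$.

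You instead differentiate in $q$ first and find $\partial\jsd_\beta/\partial q(v)=(1-\beta)\log\frac{q(v)}{m(v)}$ exactly. This exposes something the paper misses: its $p$-term and $(1-\beta)Y$ term cancel identically, since they combine to $-(1-\beta)\sum_v q(v)(\delta_{iv}-q(i))=0$. So the paper's $(1-\beta)X$ is the entire gradient, and the paper pays triangle-inequality slack on a quantity that is zero.

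Both routes share the essential step you flagged: never bound $\sum_v q(v)\lvert\log q(v)\rvert$ termwise, but recognize $D_{\kl}(q\|m)$ and invoke Part~1. Your route buys a sharper constant ($\log 2$ at $\beta=0.5$) with fewer inequalities. The paper's route buys only reuse of generic per-KL gradient formulas.

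Your caveat that $\beta$ must be bounded away from $\{0,1\}$ is unnecessary. With $x=1-\beta$, your first term is at most $\max\{x\log\frac1x,\,x/e\}\le 1/e$ and your second term is at most $x\log\frac1x\le 1/e$. Hence $\|\nabla_z\jsd_\beta\|_\infty\le 2/e$ uniformly over $\beta\in(0,1)$, which matches and improves the paper's uniform-in-$\beta$ claim.
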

\textit{Proof in App.~\ref{app:proof_lem_jsd}.}

\begin{lemma}[Reverse KL: mode concentration~\citep{gu2024minillm}]
\label{lem:rev_kl_mode}
Let $p = \sum_{j=1}^J \alpha_j p_j$ be a mixture with $J$ modes having disjoint supports. Under the softmax parameterization, gradient-descent on $D_{\kl}(q\|p)$ converges to a stationary point $q^*$ that concentrates on the dominant mode $p_{j^*}$ ($\alpha_{j^*} = \max_j \alpha_j$), with cost $D_{\kl}(q^*\|p) = -\log\alpha_{j^*}$~\citep{gu2024minillm}.
\end{lemma}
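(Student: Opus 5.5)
The plan is to reduce the reverse-KL objective to a problem over how much mass the student places on each mode, and then analyse the softmax gradient flow on that reduced problem. Write $S_j$ for the disjoint supports of the $p_j$. Decompose $q=\sum_j w_j q_j$ with $w_j=q(S_j)$ and $q_j=q(\cdot\mid S_j)$; mass outside $\bigcup_j S_j$ gives infinite cost and is driven to zero, so it can be ignored. A direct computation gives
\begin{equation*}
D_{\kl}(q\|p)=\sum_j w_j\bigl[D_{\kl}(q_j\|p_j)-\log\alpha_j\bigr]+\sum_j w_j\log w_j .
\end{equation*}
Under the softmax parameterization, the logits inside $S_j$ control only $q_j$, and one extra logit shift per block controls $w_j$. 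The within-mode terms therefore decouple, and gradient descent drives each $D_{\kl}(q_j\|p_j)$ to its infimum, which is $0$ if the student can represent $p_j$.

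I need to be explicit about one point. If $w$ were completely free, the reduced objective $\sum_j w_j(-\log\alpha_j)+\sum_j w_j\log w_j$ would be minimized at $w=\alpha$, with $q=p$ and cost $0$. So the concentration claim cannot hold for an unrestricted student. I would therefore state the proof under the capacity restriction implicit in \citet{gu2024minillm}: the student is mode-seeking, meaning its across-mode entropy term $\sum_j w_j\log w_j$ is not available to it. Concretely, either the student is restricted to single-mode (unimodal) distributions, or we work in the low-temperature limit in which $q$ commits its mass to one block. Under that restriction the reduced objective is linear in $w$:
\begin{equation*}
F(w)=\sum_j w_j c_j,\qquad c_j=-\log\alpha_j .
\end{equation*}

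The key step is the dynamics. Parameterize $w=\mathrm{softmax}(u)$ over the block logits. Gradient flow on $F$ gives
\begin{equation*}
\dot w_j=-w_j\,(c_j-\bar c),\qquad \bar c=\sum_k w_k c_k ,
\end{equation*}
which is the replicator equation, and discrete gradient descent with a small step behaves the same way. From $\tfrac{d}{dt}\log(w_{j^*}/w_j)=c_j-c_{j^*}>0$ for every $j\neq j^*$, the ratios $w_j/w_{j^*}$ decay exponentially whenever $w_{j^*}(0)>0$, which always holds under softmax. Hence $w\to e_{j^*}$. Combined with the within-mode convergence, this gives a stationary point $q^*=p_{j^*}$. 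Its cost, from the decomposition, is $D_{\kl}(q_{j^*}\|p_{j^*})-\log\alpha_{j^*}=-\log\alpha_{j^*}$.

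I expect the main obstacle to be justifying the capacity restriction cleanly, since without it the statement is false. A secondary difficulty is ties in $\max_j\alpha_j$, which I would exclude by assumption. I also need to handle the coupling between the within-mode and across-mode flows in discrete time. For that, I would argue by timescale separation, or note that the block-shift logit direction is orthogonal to the within-block directions, so the two flows commute exactly in continuous time.
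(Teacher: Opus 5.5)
\paragraph{Where your route departs from the paper.}
Your block decomposition is the same as the paper's reduction to $D_{\kl}(\mu\|\alpha)$ (your $w$). Your diagnosis of the next step is also correct, and sharper than the paper's: for an unrestricted softmax student the lemma is false. With $w=\mathrm{softmax}(u)$, the logit gradient of $D_{\kl}(w\|\alpha)$ is $w_j\bigl[\log(w_j/\alpha_j)-D_{\kl}(w\|\alpha)\bigr]$. This is nonpositive for the most under-represented block (smallest $w_j/\alpha_j$) and nonnegative for the most over-represented one. Hence the spread of $\log(w_j/\alpha_j)$ never grows and no mode is drained. Since $D_{\kl}(w\|\alpha)$ is a Lyapunov function whose only interior critical point is $w=\alpha$, gradient descent converges to $q=p$ with cost $0$.

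The paper's proof concedes that $\mu=\alpha$ is attainable. It then bridges the gap with a vanishing-gradient/self-reinforcement heuristic attributed to \citet{gu2024minillm}. The sign computed above shows that the gradient on a small mode is small but \emph{restoring}, so that heuristic fails. Even if it held, it would select whichever mode is initially favored, not $\arg\max_j\alpha_j$. Your argument therefore departs from the paper's exactly at the decisive step, which neither argument actually closes. Your replacement step also has gaps of its own.

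\paragraph{Gaps in your replacement step.}
First, your linear objective is $F(w)=\sum_j w_jc_j=D_{\kl}(q\|p)+H(w)$, with $H(w)=-\sum_j w_j\log w_j$. For any student whose $w$ is interior, the reverse KL equals $F-H(w)\neq F$. So no restriction of the model class produces $F$; only adding an explicit penalty on mixing modes does, and then concentration is put into the loss rather than derived from it. A genuinely single-block student has $w$ pinned at a vertex and no across-mode dynamics at all. Gradient descent converges to the $p_j$ of whichever block it starts in: every $p_j$ is a stationary point with cost $-\log\alpha_j$, and $p_{j^*}$ is merely the global minimizer. The low-temperature variant presupposes the conclusion. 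What you can honestly prove is this: over single-mode students the minimizer is $p_{j^*}$ with cost $-\log\alpha_{j^*}$, and gradient descent reaches it only under an initialization assumption.

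Second, even for your surrogate the dynamics are misidentified. The replicator equation is the natural-gradient (exponentiated-gradient) flow. Euclidean gradient descent on the logits instead gives $\dot u_j=w_j(\bar c-c_j)$, hence $\frac{d}{dt}\log(w_{j^*}/w_j)=w_{j^*}(\bar c-c_{j^*})-w_j(\bar c-c_j)$. This can be transiently negative once $J\geq 3$. Convergence to $e_{j^*}$ still holds for this linear objective, since it is exact-gradient softmax policy gradient on a bandit, where $u_{j^*}$ is nondecreasing and $\sum_j u_j$ is conserved. But it holds only at rate $O(1/t)$, and by that argument, not by exponential ratio decay.

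Finally, the block coordinate $u_j$ is the log-sum-exp of the block's logits, so it depends on their within-block shape. Orthogonality of directions in $z$ therefore does not make the within-block and across-block flows decouple or commute, and joint convergence needs a direct argument.
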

\textit{Proof in App.~\ref{app:proof_lem_revkl}.}

\subsection{Task-Adaptive Divergence Principle}
\label{sec:divergence_principle}

We now translate the lemmas into task-specific propositions for the two regimes of \S\ref{sec:opd_rl}, which demand opposite divergences.

\paragraph{Agentic tasks: gradient stability under privileged $p$--$q$ gaps.}
\label{sec:jsd_theory}
A multi-step agentic OPD trajectory $\tau = (a_1, o_1, \ldots, a_M, o_M)$ accumulates errors over $M$ steps, where a mistake at step $m$ cascades to every subsequent decision. The privileged $p$--$q$ gap (\S\ref{sec:opd_rl}) takes its sharpest form here: under the dense OPD estimator (App.~\ref{app:opd_grad}), what propagates through training is not the trajectory \emph{return} but the per-step \emph{logit gradient} $\nabla_{z_t} D$.

\begin{proposition}[Per-token gradient stability under privileged $p$--$q$ gaps]
\label{prop:trajectory_var}
Under the dense OPD gradient estimator, the per-token logit gradient satisfies:
\begin{enumerate}[leftmargin=1.5em,itemsep=0pt]
    \item \textbf{JSD:} $\|\nabla_{z_t}\jsd_\beta(p\|q)\|_\infty \leq 2$ worst-case (Lemma~\ref{lem:jsd_bound}.2), with no support-overlap or independence assumption. By per-position decoupling, $\|\nabla_z \mathcal{L}_{\jsd}(\tau)\|_\infty \leq 2$ under either reduction (single-turn mean as in Eq.~\ref{eq:opd}, or sum-of-per-step-means as in OPAD), \emph{independent of trajectory length~$M$ and of the teacher--student gap}.
    \item \textbf{Reverse KL:} $\|\nabla_{z_t} D_{\kl}^{\leftarrow}(p\|q)\|_\infty$ is \emph{unbounded}: it contains a term proportional to $q(i)\log(q(i)/p(i))$, which diverges whenever $p(i) \to 0$ with $q(i) > 0$, the precise regime created by privileged supervision. Hence $\|\nabla_z \mathcal{L}_{\kl}^{\leftarrow}(\tau)\|_\infty$ is unbounded under \emph{any} reduction.
    \item \textbf{Forward KL:} $\|\nabla_{z_t} D_{\kl}^{\to}(p\|q)\|_\infty = \|q-p\|_\infty \leq 1$ is bounded; mode-covering nonetheless degrades agentic performance (App.~\ref{app:fwd_kl_instability}).
\end{enumerate}
\end{proposition}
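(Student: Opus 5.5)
The plan is to handle the three parts separately, working throughout with the softmax Jacobian $\partial q(v)/\partial z_i = q(v)(\mathbb{1}[v=i]-q(i))$. This reduces every per-token logit gradient to the form $\nabla_{z_i} D = q(i)\big(g(i) - \E_{q}[g]\big)$, where $g(v)=\partial D/\partial q(v)$. For \textbf{JSD} I would take the per-token bound $\|\nabla_{z_t}\jsd_{0.5}(p\|q)\|_\infty\le 2$ directly from Lemma~\ref{lem:jsd_bound}.2. For a consistency check, note that $g(v)=\tfrac12\log\frac{q(v)}{m(v)}$ with $m=\tfrac12(p+q)$, since the terms coming from differentiating $m$ cancel. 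It follows that $q(i)\,|g(i)|\le \tfrac12 q(i)\log 2$ whenever $q(i)\le 2m(i)$. The potentially unbounded case is $q(i)\ll m(i)$. There the factor $q(i)\log(q(i)/m(i))$ is bounded by $1/e$, because $x\log x$ is bounded on $(0,1]$. Combining the two cases with a triangle inequality gives a constant of at most $2$.

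For the trajectory-level statement, I would use per-position decoupling. The logits $z_t$ at position $t$ enter only the $t$-th summand of the loss, because teacher-forcing conditions on the sampled prefix $\hat y_{<t}$, which is treated as fixed in the dense estimator of App.~\ref{app:opd_grad}. So $\nabla_{z_t}\mathcal{L}(\tau)=w_t\nabla_{z_t}\jsd(p_t\|q_t)$, with $w_t=1/|\hat y|$ in Eq.~\ref{eq:opd} or $w_t=1/|a_m|$ inside step $m$ under the sum-of-per-step-means reduction. In both cases $w_t\le 1$. Since $\|\cdot\|_\infty$ over the stacked logits is the maximum over positions, the total is bounded by $\max_t w_t\cdot 2\le 2$, uniformly in $M$ and in $p$.

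For \textbf{reverse KL}, I would compute $g(v)=\log\frac{q(v)}{p(v)}+1$. This gives $\nabla_{z_i}D_{\kl}(q\|p)=q(i)\log\frac{q(i)}{p(i)}-q(i)D_{\kl}(q\|p)$. To show unboundedness, I would exhibit a two-token family: fix $q=(\tfrac12,\tfrac12)$ and let $p=(1-\epsilon,\epsilon)$. Then $\nabla_{z_2}=\tfrac12\log\frac{1}{2\epsilon}-\tfrac12 D_{\kl}(q\|p)$, where $D_{\kl}(q\|p)=\tfrac12\log\frac{1}{2\epsilon}+\tfrac12\log\frac{1}{2(1-\epsilon)}$. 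The net coefficient of $\log(1/\epsilon)$ is $\tfrac14>0$, so the gradient diverges as $\epsilon\to0$. The main obstacle is exactly this step: checking that the subtracted mean term $q(i)D_{\kl}(q\|p)$ does not cancel the divergent term, which is why the explicit family is needed. Unboundedness under any reduction then follows because decoupling makes the trajectory gradient at $z_t$ a positive multiple $w_t>0$ of this per-token gradient. One can therefore embed the bad position in any trajectory.

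For \textbf{forward KL}, I would take $g(v)=-p(v)/q(v)$. Then $\nabla_{z_i}=-p(i)+q(i)\sum_v p(v)=q(i)-p(i)$, so the $\ell_\infty$ norm is at most $1$ because both entries lie in $[0,1]$. The claim that performance degrades is empirical and mode-geometric rather than a norm statement, so I would defer it to App.~\ref{app:fwd_kl_instability} and not prove it here.
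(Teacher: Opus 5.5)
Your proposal is correct and follows the paper's proof essentially step for step (Lemma~\ref{lem:jsd_bound}.2 plus max-over-positions decoupling for JSD, the softmax-Jacobian formula $\nabla_{z_i}D_{\kl}(q\|p)=q(i)\log\frac{q(i)}{p(i)}-q(i)D_{\kl}(q\|p)$ for reverse KL, and $q-p$ for forward KL), and your explicit family $q=(\tfrac12,\tfrac12)$, $p=(1-\epsilon,\epsilon)$, which gives $\nabla_{z_2}=\tfrac14\log\frac{1-\epsilon}{\epsilon}$, usefully verifies something the paper only asserts from the diagonal term, namely that the subtracted mean term does not cancel the divergence. The one slip is in your optional JSD cross-check: $q(i)\le 2m(i)$ always holds and yields only $g(i)\le\tfrac12\log 2$, so the first case should be $q(i)\ge m(i)$, and the mean term $\tfrac12 q(i)D_{\kl}(q\|m)\le\tfrac12\log 2$ must also be bounded before the triangle inequality (after which you get the sharper constant $\log 2$, because the $m$-derivative terms cancel exactly); since you take the actual bound from Lemma~\ref{lem:jsd_bound}.2, this does not affect the proof.
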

\textit{Proof sketch.} Part~1 follows from Lemma~\ref{lem:jsd_bound}.2 plus per-position additive decomposition (the $\ell_\infty$ norm of the trajectory gradient inherits the per-position bound). Part~2 follows from the softmax-Jacobian term $q(i)\log(q(i)/p(i))$ diverging as $p(i)\!\to\!0$ with $q(i)\!>\!0$. Part~3 from $|q(i)-p(i)| \leq 1$. Full derivations in App.~\ref{app:proof_prop_traj}.

\paragraph{Code generation: mode concentration on coherent paths.}
\label{sec:revkl_theory}
For code, multiple correct implementations (different algorithms, naming, orderings) make the teacher distribution a mixture $p = \sum_{j=1}^J \alpha_j p_j$ over $J$ disjoint code paths.

\begin{proposition}[Reverse KL produces coherent code, forward KL and JSD do not]
\label{prop:rev_kl_code}
Let $p = \sum_{j=1}^J \alpha_j p_j$ be a per-token multi-modal code distribution with disjoint-support modes and a path-coherent dominant mode $j^*$ (consistent across positions). For a softmax-parameterized student $q^*$ trained against each divergence:
\begin{enumerate}[leftmargin=1.5em,itemsep=0pt]
    \item \textbf{Reverse KL:} $q^*$ concentrates on the dominant mode $p_{j^*}$ ($j^* = \arg\max_j \alpha_j$), assigning near-zero mass to other modes and avoiding incoherent splicing.
    \item \textbf{Forward KL:} $q^*$ covers all modes ($q^*(v) > 0$ on $\bigcup_j \mathrm{supp}(p_j)$), averaging incompatible code structures.
    \item \textbf{JSD:} $q^*$ partially concentrates on $p_{j^*}$ but retains non-negligible mass on secondary modes.
\end{enumerate}
\end{proposition}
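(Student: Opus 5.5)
The plan is to treat each divergence separately at a single position $t$, with $p=\sum_j \alpha_j p_j$ and the $p_j$ supported on disjoint sets $S_j\subseteq\mathcal{V}$, and minimize over the softmax-parameterized $q$ (the closure of the simplex, since softmax reaches near-zero mass only in the limit). Path coherence, i.e.\ the same $j^*$ at every position, then lifts the per-token statements to the trajectory. For each $q$ I write $q=\sum_j w_j q_j$, with $w_j=q(S_j)$ and each $q_j$ a distribution on $S_j$; mass outside $\bigcup_j S_j$ is handled separately. Each objective then splits into a "between-mode" part depending only on $w$ and a "within-mode" part depending on the $q_j$. I would minimize these two parts in turn.

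\textbf{Reverse KL.} I would invoke Lemma~\ref{lem:rev_kl_mode} directly and check the decomposition it relies on. Any $q$ with mass outside $\bigcup_j S_j$ gives $D_{\kl}(q\|p)=\infty$. Otherwise
\[
D_{\kl}(q\|p)=\sum_j w_j\log\frac{w_j}{\alpha_j}+\sum_j w_j D_{\kl}(q_j\|p_j).
\]
Setting $q_j=p_j$ removes the second term. On the vertices of the simplex the first term equals $-\log\alpha_j$, which is smallest at $j=j^*$ with value $-\log\alpha_{j^*}$, matching the lemma. The global minimizer of this term is $w=\alpha$ with value $0$, so it is not itself a vertex. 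The "concentration" claim must therefore be read as the lemma reads it: gradient dynamics from a typical or mode-seeking initialization reach the stationary point $q^*=p_{j^*}$. I would argue that every vertex $w=e_j$ is a stationary point within the face the softmax flow preserves in the limit, since $w_j\to 0$ stays near $0$ because the gradient is proportional to $q$. Among these stationary points $j^*$ has the lowest cost, and near-zero mass on the other modes yields no mixing across $S_j$'s, so no splicing occurs.

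\textbf{Forward KL.} Here $D_{\kl}(p\|q)$ is infinite unless $q(v)>0$ wherever $p(v)>0$. Its unique minimizer over the simplex is $q^*=p$, obtained from the stationarity condition $\nabla_z=q-p=0$ already used in Proposition~\ref{prop:trajectory_var}.3. Then $q^*(v)>0$ on $\bigcup_j\mathrm{supp}(p_j)$, with mass $\alpha_j$ on every mode, so sampling position by position mixes incompatible paths.

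\textbf{JSD.} This is the step I expect to be the main obstacle. The global minimizer of $\jsd_{0.5}$ is again $q=p$, so "partial concentration" cannot be a statement about the global optimum. I would instead make the same stationary-point comparison as for reverse KL, restricted to the two-mode reduction $q_j=p_j$. The objective then becomes a function of $w$ alone:
\[
\sum_j \left[\tfrac{\alpha_j}{2}\log\tfrac{2\alpha_j}{\alpha_j+w_j}+\tfrac{w_j}{2}\log\tfrac{2w_j}{\alpha_j+w_j}\right].
\]
At a vertex $w=e_{j^*}$ this is finite, unlike reverse KL, since JSD does not penalize zero student mass infinitely. I would show that its gradient at the vertex pushes mass back toward the secondary modes. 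The derivative in $w_j$ at $w_j=0$ is $\tfrac12\log(w_j/\alpha_j)\to-\infty$, so the vertex is not stationary and any stationary point has $w_j>0$ for every $j$. JSD's bounded-gradient behavior (Lemma~\ref{lem:jsd_bound}) means it interpolates between the two KLs. I would therefore conclude that it retains mass on secondary modes while favoring $j^*$, with $w_{j^*}$ largest by monotonicity in $\alpha_j$. Making this precise beyond the vertex analysis would likely require a specific dynamics, such as early-stopped gradient flow from a $j^*$-leaning initialization, and I would state it at that level of rigor.
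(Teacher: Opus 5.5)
Your reverse-KL and forward-KL parts follow the paper's route. That route is the split into $D_{\kl}(w\|\alpha)$ plus within-mode terms, the concession that the unconstrained optimum is $w=\alpha$, the appeal to the $q$-weighted vanishing gradient to single out the vertex $p_{j^*}$, and $q^*=p$ for forward KL. The gap is in Part~3, which is where the proposition's contrast has to come from. Your vertex analysis does not distinguish JSD from reverse KL.

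\begin{itemize}
\item \textbf{Simplex coordinates.} You call the JSD vertex non-stationary because the $w_j$-derivative of the reduced objective tends to $-\infty$ as $w_j\to 0$. That derivative is $\tfrac12\log\frac{2w_j}{\alpha_j+w_j}$, not $\tfrac12\log(w_j/\alpha_j)$. But the reduced reverse-KL objective $\sum_j w_j\log(w_j/\alpha_j)$ has $w_j$-derivative $\log(w_j/\alpha_j)+1$, which also tends to $-\infty$. By your own criterion, the reverse-KL vertex is not stationary either.
\item \textbf{Logit coordinates,} which you used for reverse KL. The JSD gradient on a secondary-mode token $i\in S_j$ is $q(i)\big[g_i-\sum_v q(v)g_v\big]$ with $g_v=\tfrac12\log\frac{2q(v)}{p(v)+q(v)}$. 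At $w_j=\epsilon$ this is about $\tfrac12\,\epsilon\,p_j(i)\log\epsilon\to 0$. Reverse KL's gradient there is about $\epsilon\,p_j(i)\log\epsilon$. Both vanish at the same $\epsilon\log(1/\epsilon)$ rate, and JSD's leading restoring coefficient is half of reverse KL's.
\end{itemize}

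Under any single notion of stationarity, the two divergences behave alike near the vertex; if anything, JSD pulls mass back more weakly. The asymmetry between your Parts~1 and~3 comes only from switching coordinates between them.

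The Part~3 conclusion also collapses onto Part~2. With $q_j=p_j$, the Lagrange condition $\tfrac12\log\frac{2w_j}{\alpha_j+w_j}=\lambda$ forces $w_j/\alpha_j$ to be constant. So the only stationary point with all $w_j>0$ is $w=\alpha$, i.e.\ $q^*=p$, which is the forward-KL answer. Your ``$w_{j^*}$ largest'' then just says $\alpha_{j^*}=\max_j\alpha_j$, with no concentration beyond the teacher's own weight.

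What is missing is an ingredient on which the divergences actually differ. The paper's Part~3 argues, qualitatively, from the mixture $m=\tfrac12(p+q)$, which removes reverse KL's infinite penalty. Because $m\ge\tfrac12 p$ and $m\ge\tfrac12 q$, JSD charges only a bounded cost for student mass on tokens with $p(v)=0$. Reverse KL's term $q(v)\log(q(v)/p(v))$ is infinite there. To turn this into a real separation, you need a setting in which spreading mass over several modes forces mass onto such zero-$p$ tokens. Examples are a capacity restriction on the student, or the sequence level, where spliced paths have zero teacher probability. A local analysis at a vertex of the mode simplex cannot supply this.
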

\textit{Proof sketch.} Part~1 follows from Lemma~\ref{lem:rev_kl_mode}: the infinite penalty on inter-mode mass drives $q^*$ onto a single mode. Part~2 follows from $D_{\kl}^{\to}(p\|q) = \infty$ whenever $q(v) = 0$ on $\mathrm{supp}(p)$. Part~3 follows from JSD's mixture $m \geq \beta p$, which finitely penalizes inter-mode mass. Full derivations in App.~\ref{app:proof_prop_code}.

Combining Propositions~\ref{prop:trajectory_var} and~\ref{prop:rev_kl_code} yields the selection principle that determines which divergence each task structure demands.

\begin{remark}[Task-adaptive divergence selection principle]
\label{rem:task_dep}
For OPD with a privileged teacher, the divergence $D$ best matched to the task structure is selected as follows:
\begin{itemize}[leftmargin=1.5em,itemsep=2pt,topsep=2pt]
    \item \textbf{Multi-step agentic tasks} $\to$ $D = \jsd_\beta$. By Prop.~\ref{prop:trajectory_var}.1, the per-position logit gradient is bounded by $2$ uniformly across positions and privileged $p$--$q$ gaps, supporting stable long-trajectory training (App.~\ref{app:loss_stability}). Reverse KL admits no such finite bound under any reduction (Prop.~\ref{prop:trajectory_var}.2); forward KL is bounded but mode-covering produces incoherent rollouts (Prop.~\ref{prop:trajectory_var}.3 and App.~\ref{app:fwd_kl_instability}).
    \item \textbf{Code generation} $\to$ $D = D_{\kl}^{\leftarrow}$. By Prop.~\ref{prop:rev_kl_code}.1, the reverse-KL minimizer concentrates probability mass on a single coherent implementation path; forward KL and JSD provably splice incompatible modes (Prop.~\ref{prop:rev_kl_code}.2--3).
\end{itemize}
Sec.~\ref{sec:ablation} validates both predictions empirically (Table~\ref{tab:divergence}, Figure~\ref{fig:loss_stability}).
\end{remark}

\section{Method: \ours{}}
\label{sec:method}

\ours{} translates the task-adaptive divergence principle of Sec.~\ref{sec:divergence_principle} into a concrete training pipeline (Figure~\ref{fig:architecture}). Sec.~\ref{sec:mad} produces the privileged information $c$ that Eq.~\ref{eq:opd} requires via inter-teacher debate; Sec.~\ref{sec:weighting} aggregates teachers by post-debate confidence; Sec.~\ref{sec:loss} writes the resulting task-adaptive loss; Sec.~\ref{sec:opad} presents OPAD, an OPD-based training method for agentic tasks that brings step-level environment interaction.

\subsection{Multi-Agent Debate as Privileged Information Generation}
\label{sec:mad}

Given $K$ teachers $\{T_1, \ldots, T_K\}$ and the current state $s_m$ (the prompt $x$ for single-turn tasks, or the context $(x, \tau_{<m})$ for multi-step agentic tasks), debate runs for $R$ rounds. At round $1$ each teacher samples independently with no inter-teacher context; at every subsequent round each teacher revises its round-$r$ response $h_r^k$ after reading all prior responses from every teacher:
\begin{equation}
\label{eq:debate}
h_r^k \sim p_{T_k}\big(\cdot \mid s_m,\; \{h_{r'}^j\}_{j=1,\ldots,K,\; r'<r}\big), \quad k = 1, \ldots, K, \;\; r = 1, \ldots, R.
\end{equation}
The diverse initial responses (driven by differences in capacity or parameterization) seed the deliberation, and the iterative exchange lets teachers challenge each other and catch errors any individual teacher might miss. The full debate history $\mathcal{H}_m^R = \{h_r^k\}_{k,r}$ forms the privileged context:

\begin{definition}[Debate-generated privileged information]
\label{def:privileged}
The privileged context $c_m \triangleq \mathcal{H}_m^R$ is visible to teachers ($p_{T_k}(\cdot \mid s_m, c_m, \hat{y}_{<t})$) but hidden from the student ($p_S(\cdot \mid s_m, \hat{y}_{<t})$), instantiating Eq.~\ref{eq:opd} with collaboratively generated $c$.
\end{definition}

\begin{figure}[t]
  \centering
  \includegraphics[width=\linewidth]{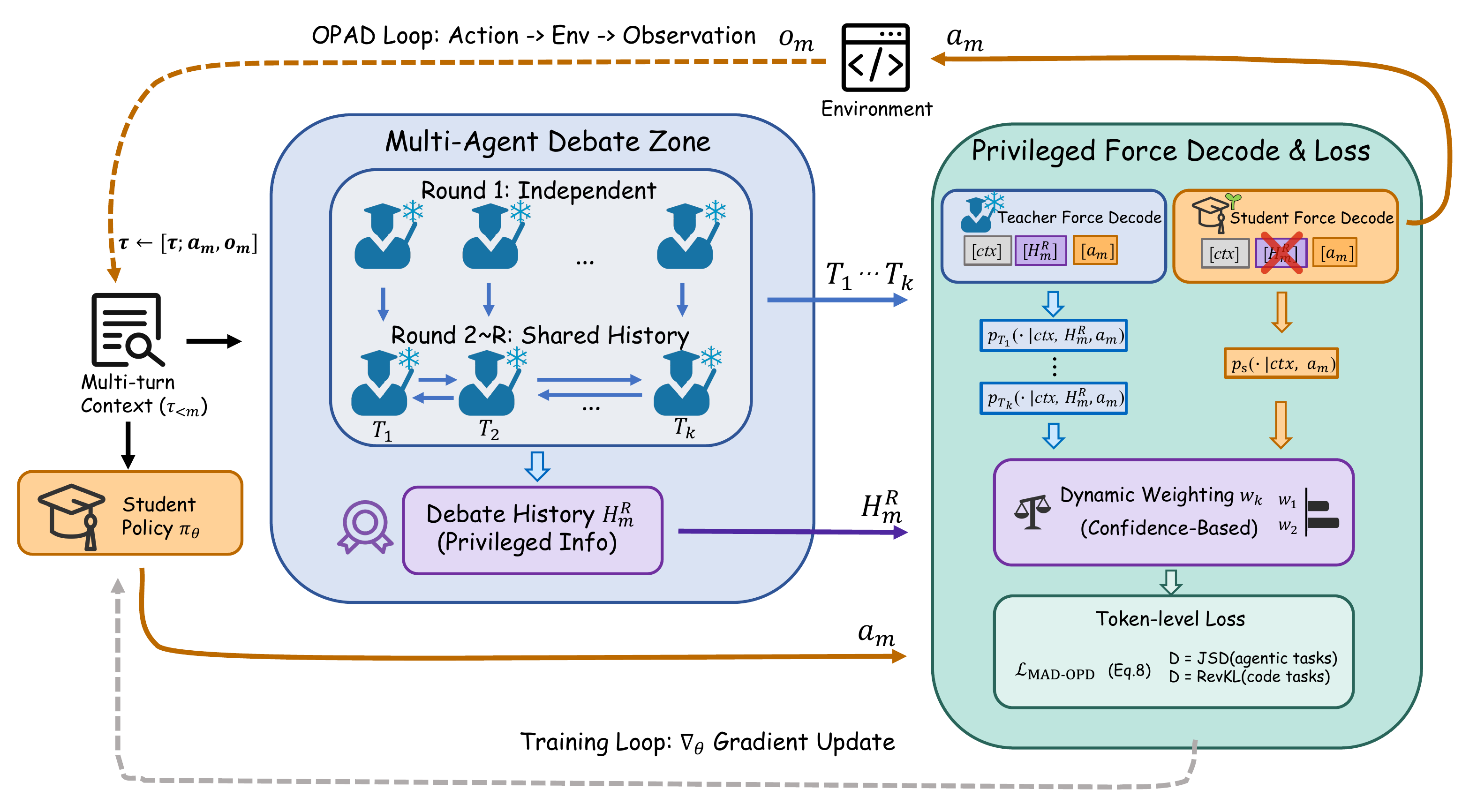}
  \caption{\textbf{The \ours{} Pipeline.} At trajectory step $m$, the student $\pi_\theta$ samples an on-policy action $a_m$; $K$ teachers debate for $R$ rounds to produce a transcript $\mathcal{H}_m^R$ visible only to the teachers, establishing the privileged $p$--$q$ gap. Teachers then force-decode $a_m$ and contribute to a confidence-weighted divergence $D$ ($\jsd$ for agents, $D_{\kl}^{\leftarrow}$ for code); gradients update only $\pi_\theta$. The dashed border marks the OPAD outer loop where the environment returns observation $o_m$ for the next step.}
  \label{fig:architecture}
\end{figure}

\subsection{Confidence-Based Dynamic Teacher Weighting}
\label{sec:weighting}

After round $R$, each teacher $T_k$ self-reports a scalar confidence score $c_k \in [0,100]$ extracted from its structured output, reflecting its certainty given the full debate history $\mathcal{H}_m^R$:
\begin{equation}\label{eq:confidence}
c_k = f_{\mathrm{conf}}\big(T_k;\; s_m,\, \mathcal{H}_m^R\big).
\end{equation}
Per-teacher weights are obtained by softmax normalization over the unit-scaled scores $\tilde{c}_k := c_k/100 \in [0,1]$:
\begin{equation}\label{eq:consensus_score}
w_k \;=\; \frac{\exp(\tilde{c}_k / \tau_{\mathrm{conf}})}{\textstyle\sum_{j=1}^{K} \exp(\tilde{c}_j / \tau_{\mathrm{conf}})}, \qquad \textstyle\sum_{k=1}^K w_k = 1,
\end{equation}
with temperature $\tau_{\mathrm{conf}} = 1.0$ throughout. Because $c_k$ is produced \emph{after} all $R$ rounds, the weights reflect post-deliberation certainty rather than pre-debate agreement: a teacher whose position weakens during debate contributes less. Robustness under biased or uninformative confidence is analyzed in App.~\ref{app:component_full}.

\subsection{Token-Level Distillation Objective}
\label{sec:loss}

The right panel of Figure~\ref{fig:architecture} shows the force-decode stage. Each teacher processes the student's on-policy sample $\hat{y}$ with the debate transcript $\mathcal{H}_m^R$ appended to its context: $p_{T_k}(\cdot \mid s_m, \mathcal{H}_m^R, \hat{y}_{<t})$. The student encodes the same tokens without this transcript: $p_S(\cdot \mid s_m, \hat{y}_{<t})$. The \ours{} loss aggregates per-teacher divergences weighted by their confidence scores:
\begin{equation}
\label{eq:mad_opd_loss}
\mathcal{L}_{\textsc{mad-opd}}(\theta) = \E_{s_m \sim \mathcal{D},\, \hat{y} \sim \pi_\theta(\cdot \mid s_m)} \!\left[ \frac{1}{|\hat{y}|} \sum_{t=1}^{|\hat{y}|} \sum_{k=1}^{K} w_k \cdot D\big(p_{T_k}(\cdot \mid s_m, \mathcal{H}_m^R, \hat{y}_{<t}) \;\big\|\; p_S(\cdot \mid s_m, \hat{y}_{<t})\big) \right].
\end{equation}
Following Remark~\ref{rem:task_dep}, we set $D = \jsd_\beta$ for agentic tasks and $D = D_{\kl}^{\leftarrow}$ for code generation, where $D_{\kl}^{\leftarrow}(P\|Q) \triangleq D_{\kl}(Q\|P)$ explicitly denotes the reverse KL. Divergences span the full vocabulary; teacher logits are treated as fixed targets, so gradients flow only through the student.

\subsection{On-Policy Agentic Distillation (OPAD)}
\label{sec:opad}

Guided by Prop.~\ref{prop:trajectory_var}, we propose \emph{On-Policy Agentic Distillation} (OPAD), instantiated here with \ours{} but compatible with single-teacher OPD. The student generates an on-policy trajectory $\tau = (a_1, o_1, \ldots, a_M, o_M)$ of length $M$: at step $m$, with state $s_m = (x, \tau_{<m})$, it samples action $a_m \sim \pi_\theta(\cdot \mid s_m)$ and the environment returns $o_m \sim P^{\mathrm{env}}(\cdot \mid s_m, a_m)$. At each step, teachers debate on $s_m$ and force-decode $a_m$ conditioned on the resulting transcript $\mathcal{H}_m^R$, giving the per-step OPAD loss:
\begin{equation}\label{eq:opad_reward}
\mathcal{L}_{D}^{\mathrm{opad}}(s_m) = \frac{1}{|a_m|}\sum_{t=1}^{|a_m|} \sum_{k=1}^{K} w_k \cdot D\big(p_{T_k}(\cdot \mid s_m, \mathcal{H}_m^R, a_{m,<t}) \;\big\|\; p_S(\cdot \mid s_m, a_{m,<t})\big).
\end{equation}
The total trajectory loss sums per-step contributions over the on-policy rollout:
\begin{equation}\label{eq:opad_total}
\mathcal{L}_{\mathrm{OPAD}}(\theta) = \E_{x \sim \mathcal{D},\, \tau \sim \pi_\theta(\cdot \mid x)}\!\left[\sum_{m=1}^{M} \mathcal{L}_{D}^{\mathrm{opad}}(s_m)\right].
\end{equation}
The key adaptive property: because debate happens \emph{per step} and conditions on actual observations $\{o_1, \ldots, o_{m-1}\}$ (not hypothetical ones), supervision adapts to whatever trajectory the student produces. Algorithm~\ref{alg:mad_opd} (App.~\ref{app:algorithm}) gives the complete procedure.

\section{Experiments}
\label{sec:exp}

We evaluate \ours{} across six teacher--student configurations, two model families, and five benchmarks. Four research questions guide the analysis: \textbf{(RQ1)}~Does debate break the single-teacher capability ceiling? \textbf{(RQ2)}~How do gains scale across teacher--student capability? \textbf{(RQ3)}~Do debate and confidence weighting yield strictly non-redundant gains over naive on-policy aggregation? \textbf{(RQ4)}~Do divergences behave as the theory of Sec.~\ref{sec:divergence_principle} predicts?

\subsection{Experimental Setup}
\label{sec:setup}

We evaluate \ours{} on two model families, Qwen3~\citep{yang2025qwen3} and Qwen3.5, across six teacher--student configurations spanning teacher-to-student size ratios from 4.7$\times$ to 15.5$\times$: two Qwen3 teacher pairs (14B+8B and 32B+30B-A3B, with the 30B-A3B teacher being the \texttt{Qwen3-30B-A3B-Instruct-2507} release) and one Qwen3.5 pair (27B+9B), with students from 1.7B to 14B. Training data is ToolACE~\citep{liu2024toolace} ($\sim$16K step-split agentic instances) for agentic tasks and OpenThoughts3~\citep{guha2025openthoughts} (30K problems) for code tasks. The benchmarks instantiate the agentic-vs-code split of our task-adaptive divergence principle (Sec.~\ref{sec:divergence_principle}), namely multi-step agentic tool use (BFCL-v4~\citep{yan2024bfcl}, $\tau^2$-Bench~\citep{barres2025tau2bench}, VitaBench~\citep{he2025vitabench}) and single-turn code generation (LiveCodeBench v6~\citep{jain2024livecodebench}, MBPP+~\citep{austin2021mbpp,liu2023evalplus}); we exclude math (already covered by recent OPD work~\citep{li2026rethinkopd,fu2026revisitopd,yang2026rlsd}) and focus on the less-explored agentic and code regimes that expose the divergence-selection question motivating our theory. Against this setup we benchmark four reference paradigms: the undistilled instruct model (\textbf{Base}); single-teacher OPD with the stronger teacher of each pair~\citep{agarwal2024opd} (\textbf{OPD}); naive multi-teacher OPD without debate (\textbf{MT-OPD}, equal weights); and off-policy multi-teacher sequence distillation with a 7:3 strong-to-weak mix (\textbf{MT-SeqKD}~\citep{kim2016sequence}); \ours{} is orthogonal to RL-based methods such as GRPO~\citep{shao2024deepseekmath}. All models run in non-thinking mode; hyperparameters ($K{=}2$, $R{=}2$, $\beta{=}0.5$), debate prompts, and full protocols are in App.~\ref{app:details}.

\subsection{Main Results}
\label{sec:main_results}

\begin{table}[!t]
\caption{\textbf{Main results (\%).} Agentic tasks use $D{=}\jsd_\beta$; code tasks use $D{=}D_{\kl}^{\leftarrow}$ (per Remark~\ref{rem:task_dep}). \textbf{Ag-Avg} / \textbf{Co-Avg}: per-task-group means. Agentic scores are accuracy; code scores are pass@1 averaged over 16 random seeds (subscript: std). \textbf{Bold}: best; \underline{underline}: second best per config.}
\label{tab:main}
\centering
\renewcommand{\arraystretch}{0.85}
\setlength{\tabcolsep}{3.5pt}
\scriptsize
\begin{tabular*}{\textwidth}{@{\extracolsep{\fill}} l ccc c cc c c @{}}
\toprule
& \multicolumn{4}{c}{\textbf{Agentic Tasks} ($D{=}\jsd$)} & \multicolumn{3}{c}{\textbf{Code Tasks} ($D{=}D_{\kl}^{\leftarrow}$)} & \\
\cmidrule(lr){2-5} \cmidrule(lr){6-8}
\textbf{Method} & BFCL-v4 & $\tau^2$ & Vita & Ag-Avg & LCB-v6 & MBPP+ & Co-Avg & \textbf{Avg} \\
\midrule
\multicolumn{9}{l}{\cellcolor{gray!8}\textit{Qwen3 \; 14B{+}8B $\to$ 1.7B}} \\
\cmidrule{1-9}
Base & 19.65 & 13.47 & 7.81 & 13.64 & 15.43{\tiny$_{\pm 0.61}$} & 51.27{\tiny$_{\pm 0.63}$} & 33.35 & 21.53 \\
MT-SeqKD & 15.52 & 13.02 & 9.06 & 12.53 & 15.82{\tiny$_{\pm 0.74}$} & 51.22{\tiny$_{\pm 0.75}$} & 33.52 & 20.93 \\
OPD & 20.91 & \underline{18.72} & \underline{12.16} & \underline{17.26} & \underline{19.26}{\tiny$_{\pm 0.84}$} & \underline{54.28}{\tiny$_{\pm 0.78}$} & \underline{36.77} & \underline{25.07} \\
MT-OPD & \underline{20.96} & 18.07 & 11.31 & 16.78 & 19.07{\tiny$_{\pm 0.73}$} & 51.85{\tiny$_{\pm 1.14}$} & 35.46 & 24.25 \\
\textbf{\ours{}} & \textbf{23.14} & \textbf{20.59} & \textbf{13.28} & \textbf{19.00} & \textbf{21.39}{\tiny$_{\pm 1.21}$} & \textbf{55.92}{\tiny$_{\pm 1.03}$} & \textbf{38.66} & \textbf{26.86} \\
\midrule
\multicolumn{9}{l}{\cellcolor{gray!8}\textit{Qwen3 \; 14B{+}8B $\to$ 4B}} \\
\cmidrule{1-9}
Base & 23.34 & 18.00 & 14.60 & 18.65 & 22.73{\tiny$_{\pm 0.65}$} & 60.27{\tiny$_{\pm 0.35}$} & 41.50 & 27.79 \\
MT-SeqKD & 22.51 & 26.07 & 15.85 & 21.48 & 23.18{\tiny$_{\pm 0.21}$} & 58.36{\tiny$_{\pm 1.32}$} & 40.77 & 29.19 \\
OPD & 24.79 & \underline{29.17} & 15.81 & 23.26 & \underline{25.00}{\tiny$_{\pm 0.52}$} & \underline{63.82}{\tiny$_{\pm 1.21}$} & \underline{44.41} & \underline{31.72} \\
MT-OPD & \underline{25.65} & 29.06 & \underline{17.72} & \underline{24.14} & 23.29{\tiny$_{\pm 0.41}$} & 60.64{\tiny$_{\pm 0.72}$} & 41.97 & 31.27 \\
\textbf{\ours{}} & \textbf{27.08} & \textbf{31.46} & \textbf{18.53} & \textbf{25.69} & \textbf{29.83}{\tiny$_{\pm 0.56}$} & \textbf{66.42}{\tiny$_{\pm 0.92}$} & \textbf{48.12} & \textbf{34.66} \\
\midrule
\multicolumn{9}{l}{\cellcolor{gray!8}\textit{Qwen3 \; 32B{+}30B\text{-}A3B $\to$ 4B}} \\
\cmidrule{1-9}
Base & 23.34 & 18.00 & 14.60 & 18.65 & 22.73{\tiny$_{\pm 0.65}$} & 60.27{\tiny$_{\pm 0.35}$} & 41.50 & 27.79 \\
MT-SeqKD & 23.09 & \textbf{35.60} & 13.87 & 24.19 & 25.06{\tiny$_{\pm 0.26}$} & 63.29{\tiny$_{\pm 1.15}$} & 44.17 & 32.18 \\
OPD & \underline{27.70} & 32.39 & 16.20 & 25.43 & 26.96{\tiny$_{\pm 0.41}$} & \underline{64.43}{\tiny$_{\pm 1.03}$} & \underline{45.70} & \underline{33.54} \\
MT-OPD & 24.95 & 34.60 & \underline{17.29} & \underline{25.61} & \underline{27.14}{\tiny$_{\pm 0.43}$} & 61.92{\tiny$_{\pm 0.85}$} & 44.53 & 33.18 \\
\textbf{\ours{}} & \textbf{27.76} & \underline{34.62} & \textbf{19.53} & \textbf{27.30} & \textbf{30.42}{\tiny$_{\pm 0.43}$} & \textbf{66.37}{\tiny$_{\pm 0.95}$} & \textbf{48.40} & \textbf{35.74} \\
\midrule
\multicolumn{9}{l}{\cellcolor{gray!8}\textit{Qwen3 \; 32B{+}30B\text{-}A3B $\to$ 8B}} \\
\cmidrule{1-9}
Base & 28.94 & 30.08 & 17.16 & 25.39 & 20.93{\tiny$_{\pm 0.76}$} & 63.56{\tiny$_{\pm 0.61}$} & 42.25 & 32.13 \\
MT-SeqKD & \underline{30.62} & \underline{39.06} & 16.78 & 28.82 & 21.93{\tiny$_{\pm 0.31}$} & 66.36{\tiny$_{\pm 1.52}$} & 44.14 & 34.95 \\
OPD & 29.94 & 36.91 & 22.02 & 29.62 & \underline{26.83}{\tiny$_{\pm 0.34}$} & 66.64{\tiny$_{\pm 0.93}$} & 46.73 & 36.47 \\
MT-OPD & 29.96 & 38.73 & \underline{22.47} & \underline{30.39} & 26.57{\tiny$_{\pm 0.38}$} & \underline{67.20}{\tiny$_{\pm 1.31}$} & \underline{46.89} & \underline{36.99} \\
\textbf{\ours{}} & \textbf{31.39} & \textbf{40.21} & \textbf{23.11} & \textbf{31.57} & \textbf{30.71}{\tiny$_{\pm 0.74}$} & \textbf{67.32}{\tiny$_{\pm 1.12}$} & \textbf{49.02} & \textbf{38.55} \\
\midrule
\multicolumn{9}{l}{\cellcolor{gray!8}\textit{Qwen3 \; 32B{+}30B\text{-}A3B $\to$ 14B}} \\
\cmidrule{1-9}
Base & 32.46 & 33.70 & 21.70 & 29.29 & 25.57{\tiny$_{\pm 0.87}$} & 75.13{\tiny$_{\pm 0.78}$} & 50.35 & 37.71 \\
MT-SeqKD & \underline{33.99} & 40.26 & 23.24 & 32.50 & 27.25{\tiny$_{\pm 1.01}$} & 75.36{\tiny$_{\pm 0.85}$} & 51.30 & 40.02 \\
OPD & 32.16 & 40.53 & 24.47 & 32.39 & 28.73{\tiny$_{\pm 0.84}$} & \underline{76.73}{\tiny$_{\pm 1.04}$} & \underline{52.73} & 40.52 \\
MT-OPD & 30.52 & \underline{42.42} & \textbf{25.44} & \underline{32.79} & \underline{29.17}{\tiny$_{\pm 0.84}$} & 75.50{\tiny$_{\pm 0.62}$} & 52.34 & \underline{40.61} \\
\textbf{\ours{}} & \textbf{34.35} & \textbf{43.59} & \underline{24.84} & \textbf{34.26} & \textbf{31.36}{\tiny$_{\pm 1.02}$} & \textbf{77.42}{\tiny$_{\pm 1.31}$} & \textbf{54.39} & \textbf{42.31} \\
\midrule
\multicolumn{9}{l}{\cellcolor{gray!8}\textit{Qwen3.5 \; 27B{+}9B $\to$ 4B}} \\
\cmidrule{1-9}
Base & \underline{35.76} & 42.25 & 9.73 & \underline{29.25} & 21.93{\tiny$_{\pm 1.02}$} & 62.73{\tiny$_{\pm 0.98}$} & 42.33 & 34.48 \\
MT-SeqKD & 24.04 & 35.54 & 7.33 & 22.30 & \underline{29.06}{\tiny$_{\pm 0.62}$} & \underline{66.73}{\tiny$_{\pm 0.97}$} & \underline{47.90} & 32.54 \\
OPD & 34.76 & \underline{43.39} & 9.07 & 29.07 & 28.32{\tiny$_{\pm 1.02}$} & 65.21{\tiny$_{\pm 0.57}$} & 46.77 & \underline{36.15} \\
MT-OPD & 33.98 & 39.53 & \underline{10.49} & 28.00 & 28.12{\tiny$_{\pm 0.42}$} & 63.49{\tiny$_{\pm 0.94}$} & 45.80 & 35.12 \\
\textbf{\ours{}} & \textbf{36.72} & \textbf{44.83} & \textbf{10.83} & \textbf{30.79} & \textbf{30.26}{\tiny$_{\pm 0.46}$} & \textbf{67.32}{\tiny$_{\pm 1.15}$} & \textbf{48.79} & \textbf{37.99} \\
\bottomrule
\end{tabular*}
\end{table}

\noindent\textbf{(RQ1) Takeaway 1: \ours{} consistently outperforms the strongest single-teacher OPD across all six configurations.} Table~\ref{tab:main} shows \ours{} ranking first on overall Avg in every configuration, while each baseline fails on a structurally distinct axis: single-teacher OPD is capped by its teacher; MT-OPD's per-token gradient conflict drops it below single-teacher OPD on code, because per-token averaging of two teacher distributions interpolates between distinct coherent code paths each teacher prefers and produces incoherent supervision the student then absorbs; off-policy MT-SeqKD inherits the exposure bias on-policy training avoids. \ours{} bypasses all three by turning inter-teacher disagreement into a debate transcript that establishes consensus \emph{before} force-decoding, so the resulting supervision is densest where the privileged $p$--$q$ gap is widest, providing a sharper signal than single-teacher argmax decoding (Prop.~\ref{prop:trajectory_var}). As a concrete instance, App.~\ref{app:lcb_surpass} reports a 4B student trained with our 14B+8B teacher debate outperforming the 14B teacher on LCB-v6 by $+4.26\%$ pass@1 and $+10.29\%$ Best-of-$N$ (BoN@16), at competitive token cost; full BoN@16 results across all six configurations and on MBPP+ are in App.~\ref{app:bon}.

\subsection{Scaling Behavior}
\label{sec:scaling}

\noindent\textbf{(RQ2) Takeaway 2: \ours{} generalizes across families and configurations, with gains scaling alongside teacher and student capability.} Figure~\ref{fig:scaling} probes scaling along three axes: teacher capability, student capability, and cross-family transfer.


\begin{figure}[t]
\vspace{-8pt}
\centering
\begin{minipage}[t]{0.49\linewidth}
  \centering
  \includegraphics[width=\linewidth]{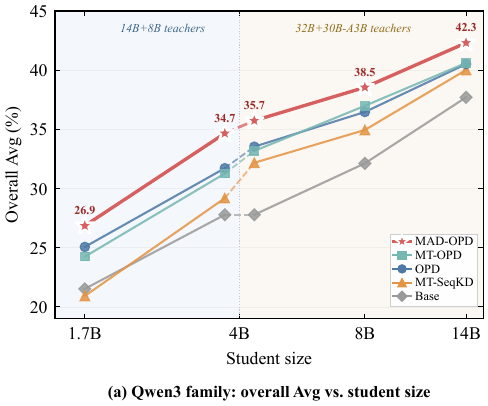}
\end{minipage}\hspace{1pt}%
\begin{minipage}[t]{0.49\linewidth}
  \centering
  \includegraphics[width=\linewidth]{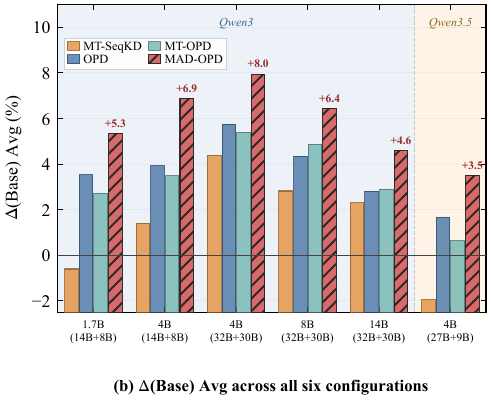}
\end{minipage}
\caption{\textbf{Scaling behavior.} \textbf{(a)}~Overall Avg (\%) vs.\ student size on the Qwen3 family; solid lines connect within-pair points, dashed segments at 4B connect cross-pair points. \textbf{(b)}~$\Delta$(Base) Avg (\%), the gain over the un-distilled Base, across all six configurations.}
\label{fig:scaling}
\end{figure}

\textbf{Debate amplifies multi-teacher complementarity.} Naive MT-OPD already exceeds single-teacher OPD on Ag-Avg in 4 of 6 configurations, showing that adding a weaker teacher's signal helps even without debate; \ours{} extends this lead to all 6 configurations, so the gain reflects \emph{debated} complementarity rather than raw ensemble capacity.

\textbf{Student--teacher capability gap shapes the scaling behavior.} Reading $\Delta$(Base) off Figure~\ref{fig:scaling}(b): an over-large teacher--student gap caps the gain ($\Delta$ rises from $+5.3\%$ at 1.7B to $+6.9\%$ at 4B under the 14B+8B teachers, since at $\sim$13$\times$ ratio the student cannot sample on-policy trajectories inside the teachers' competence regime~\citep{li2026rethinkopd}); once the student is capable enough, a stronger teacher pool injects additional usable signal (at 4B, switching from 14B+8B to 32B+30B-A3B teachers lifts \ours{} from $34.7\%$ to $35.7\%$, dashed segment in Figure~\ref{fig:scaling}(a)); and on Qwen3.5 27B+9B$\to$4B, where baselines lift Base by at most $+1.67\%$, \ours{}'s $+3.51\%$ comes from converting per-query inter-teacher disagreement into a confidence-weighted consensus that uniform multi-teacher aggregation cannot recover (MT-OPD$<$OPD on this configuration).

\vspace{-3pt}
\subsection{Component and Divergence Analysis}
\label{sec:ablation}
\vspace{-2pt}

\noindent\textbf{(RQ3) Takeaway 3: \ours{}'s components are non-redundant.} Figure~\ref{fig:ablation_compact}(a) (full numbers in App.~\ref{app:component_full}) shows that removing any of multi-teacher aggregation, on-policy training, debate, or confidence weighting hurts overall scores, confirming each contributes independently. The default $R{=}2$ (one inter-teacher revision round on top of independent generation) is the empirical optimum; $R{=}3$ overshoots due to prompt-context bloat (full $R\in\{0,1,2,3\}$ sweep in App.~\ref{app:rounds_ablation}).

\noindent\textbf{(RQ4) Takeaway 4: Divergence selection is dictated by task geometry, with JSD winning on agentic tasks and reverse KL on code.} A controlled ablation fixing all hyperparameters except $D$ (full numbers in App.~\ref{app:divergence_full}) shows the dominance pattern \emph{inverts} along the agentic/code split: JSD leads on agentic tasks while reverse KL leads on code, with forward KL trailing on both, as Prop.~\ref{prop:trajectory_var}.3 and Prop.~\ref{prop:rev_kl_code}.2 jointly predict (mode-covering harms both regimes; mode concentration matches code's single-coherent-path structure). Training-dynamics evidence in Figure~\ref{fig:loss_stability} corroborates this (App.~\ref{app:loss_stability}).

\begin{figure}[t]
\centering
\includegraphics[width=0.96\linewidth]{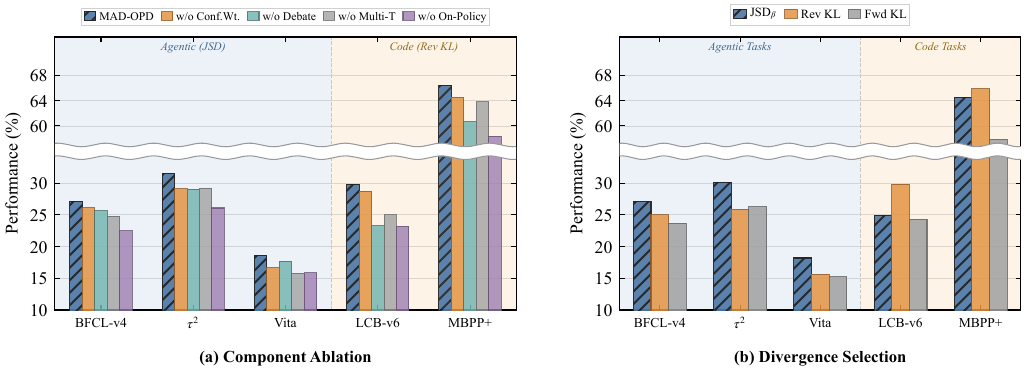}
\caption{\textbf{Ablation study} (14B{+}8B$\to$4B). Shading separates agentic/code benchmarks. \textbf{(a)}~Component ablation: debate adds $+4.6\%$ Co-Avg vs.\ MT-OPD; confidence weighting adds $+1.7\%$ Ag-Avg. \textbf{(b)}~Per-divergence ablation (full numbers: Table~\ref{tab:divergence}).}
\label{fig:ablation_compact}
\end{figure}

\vspace{-8pt}
\section{Discussion and Conclusion}
\label{sec:conclusion}
\vspace{-5pt}

We presented \ours{}, a multi-teacher debate framework for on-policy distillation paired with OPAD for stable multi-step agentic training, and a task-adaptive divergence principle (JSD for agentic OPD, reverse KL for code). Across six teacher--student configurations and five benchmarks, \ours{} ranks first in every configuration; a 4B student trained under the 14B+8B teacher debate even exceeds its 14B teacher on LCB-v6 ($+4.26\%$ pass@1, $+10.29\%$ BoN@16) at competitive token cost. This 4B-beats-14B case suggests the bottleneck of on-policy distillation is teacher-pool \emph{diversity} rather than single-teacher capability, pointing to a complementary path to chasing larger single teachers. Future work includes richer debate structures, cross-vocabulary distillation, and extending OPAD to longer-horizon agentic environments where per-step error compounding becomes more severe.

\vspace{-4pt}\paragraph{Limitations.}
\ours{} requires teacher token-level distributions, excluding black-box (API-only) models, and assumes a shared vocabulary between teacher and student; black-box and cross-vocabulary distillation are open extensions. Training cost also scales with $K \times R$ teacher forward passes per step.

\bibliographystyle{plainnat}
\bibliography{references}

\appendix

\section{Algorithm Pseudocode}
\label{app:algorithm}

Algorithm~\ref{alg:mad_opd} provides the complete \ours{} training procedure, covering both single-step (code generation, $M{=}1$) and multi-step OPAD (agentic tasks, $M{>}1$) settings. The procedure follows a \emph{think-then-critique} paradigm: teachers first deliberate on the current state $s_m$ to produce the privileged consensus $\mathcal{H}_m^R$, and only then evaluate the student's already-sampled action $a_m$ via privileged force-decoding. Teachers therefore argue about the high-level decision conditioned on $s_m$, not about specific tokens of the student's draft.

\begin{algorithm}[H]
\caption{\ours{}: Multi-Agent Debate-Driven On-Policy Distillation}
\label{alg:mad_opd}
\small
\begin{algorithmic}[1]
\REQUIRE Teachers $\{T_1, \ldots, T_K\}$, student $\pi_\theta$, data $\mathcal{D}$, debate rounds $R$, divergence $D$, total steps $M$, learning rate $\eta$
\REPEAT
    \STATE Sample batch $\mathcal{B}$ from $\mathcal{D}$
    \FOR{each $x \in \mathcal{B}$}
        \STATE $\mathcal{L}_x \leftarrow 0$;\; $\tau \leftarrow \emptyset$
        \FOR{step $m = 1, \ldots, M$}
            \STATE $a_m \sim \pi_\theta(\cdot \mid x, \tau)$ \COMMENT{on-policy; $M{=}1$ for non-agentic tasks}
            \FOR{round $r = 1, \ldots, R$; \; teacher $k = 1, \ldots, K$ \textbf{in parallel}}
                \STATE $h_r^k \sim p_{T_k}(\cdot \mid x, \tau, \{h_{r'}^j\}_{j,r'<r})$
            \ENDFOR
            \STATE $\mathcal{H}_m^R \leftarrow \{h_r^k\}_{k,r}$ \COMMENT{privileged context}
            \STATE $c_k \leftarrow f_{\mathrm{conf}}(T_k;\; x, \tau, \mathcal{H}_m^R)$;\; $w_k \leftarrow \mathrm{softmax}_k\!\big(c_{\cdot}/(100\,\tau_{\mathrm{conf}})\big)$ \;\; $\forall k$ \COMMENT{confidence weights}
            \STATE $\mathbf{p}_{T_k}^{(t)} \leftarrow p_{T_k}(\cdot \mid x, \tau, \mathcal{H}_m^R, a_{m,<t})$ \;\; $\forall k, t$ \COMMENT{force-decode \textbf{with} $\mathcal{H}_m^R$}
            \STATE $\mathbf{p}_S^{(t)} \leftarrow \pi_\theta(\cdot \mid x, \tau, a_{m,<t})$ \;\; $\forall t$ \COMMENT{student \textbf{without} $\mathcal{H}_m^R$}
            \STATE $\mathcal{L}_x \leftarrow \mathcal{L}_x + \frac{1}{|a_m|}\textstyle\sum_{t} \sum_{k} w_k \cdot D\big(\mathbf{p}_{T_k}^{(t)} \,\big\|\, \mathbf{p}_S^{(t)}\big)$
            \STATE $o_m \sim P^{\mathrm{env}}(\cdot \mid x, \tau, a_m)$;\; $\tau \leftarrow \tau \oplus (a_m, o_m)$ \COMMENT{env step}
        \ENDFOR
    \ENDFOR
    \STATE $\theta \leftarrow \theta - \eta \nabla_\theta \frac{1}{|\mathcal{B}|} \textstyle\sum_{x} \mathcal{L}_x$ \COMMENT{update $\pi_\theta$ only}
\UNTIL{convergence}
\end{algorithmic}
\end{algorithm}

\section{Experimental Details}
\label{app:details}

This appendix describes our complete training and evaluation setup. Sections~\ref{app:training_config}--\ref{app:hyperparams} cover the training pipeline; Sections~\ref{app:bfcl_detail}--\ref{app:mbpp_detail} provide per-benchmark evaluation details for the five benchmarks used in the main paper.

\subsection{Training Configuration}
\label{app:training_config}
All models operate in non-thinking mode (\texttt{enable\_thinking=False}) throughout training and evaluation. Optimization uses AdamW~\citep{loshchilov2019adamw} with $\beta_1{=}0.9$, $\beta_2{=}0.999$, weight decay $0.01$, learning rate $1{\times}10^{-5}$ with cosine decay and 5\% warmup. Effective batch size is 128 with gradient clipping at norm $1.0$. Agentic tasks use a maximum sequence length of 4096 tokens; code tasks allow up to 16384 tokens per generation. All configurations train for 1 epoch with checkpoints saved every 10 steps; the best checkpoint is selected by overall benchmark performance. The 30B-A3B teacher is the \texttt{Qwen3-30B-A3B-Instruct-2507} release; all other Qwen3/Qwen3.5 teachers and students use the standard Instruct checkpoints of the corresponding size.

\subsection{Infrastructure}
\label{app:infra}
All experiments run on 8$\times$ NVIDIA H20 GPUs with DeepSpeed ZeRO-3 for distributed training. Teacher models are served via two separate single-GPU processes: (i)~\textbf{vLLM} generates debate-round text responses, and (ii)~a \textbf{sidecar process} extracts token-level logit distributions for the distillation loss. The sidecar exposes the full vocabulary distribution rather than top-$k$ truncation, so that JSD and reverse-KL gradients see the complete teacher density. End-to-end runtime: a single 4B-student agentic-task run completes in approximately 16 hours; code-task training takes approximately 32 hours due to the longer maximum sequence length.

\subsection{Training Data}
\label{app:train_data}
\textbf{Agentic data.} Training uses the ToolACE training set~\citep{liu2024toolace}, decomposed into per-step instances under the OPAD protocol: each multi-turn trajectory is split into independent step-level training samples, producing ${\sim}$16K examples in total. At each step the student rolls out an action conditioned on the prior dialogue and tool-call history; teachers debate on the same step state and force-decode the student-sampled action. \textbf{Code data.} We sample 30K problems from OpenThoughts3~\citep{guha2025openthoughts}. Each training example is a single on-policy completion: the student generates a code rollout, both teachers produce token-level distributions conditioned on the student's prefix and (after debate) the debate history, and the consensus distribution serves as the training signal. Agentic and code tasks train separate checkpoints with no shared data; this matches the task-adaptive divergence design of Sec.~\ref{sec:divergence_principle}.

\subsection{Baseline Setup}
\label{app:baselines}
\textbf{Base.} The undistilled instruct checkpoint of the corresponding student size, evaluated directly. \textbf{OPD}~\citep{agarwal2024opd}. Single-teacher on-policy distillation using the stronger teacher of each pair (e.g., Qwen3-14B for the 14B+8B pair); divergence and decoding match \ours{}. \textbf{MT-OPD}. Naive multi-teacher OPD: both teachers produce token-level distributions independently and the loss aggregates them with uniform weights ($w_k=1/K$); no inter-teacher debate. \textbf{MT-SeqKD}~\citep{kim2016sequence}. Off-policy multi-teacher sequence distillation routed by a 7:3 strong-to-weak teacher mix: the strong teacher generates 70\% of the training sequences and the weak teacher 30\%, mixed at the sequence level. The student is trained with sequence-level cross-entropy on the routed teacher outputs.

\subsection{Debate Configuration}
\label{app:debate_config}
Each debate runs $R{=}2$ rounds with $K{=}2$ teachers. In round 1, both teachers see the task context and the student's generated prefix and respond independently with no inter-teacher context; in round 2, each teacher additionally sees the round-1 responses (including confidence scores) prefixed with speaker identity and revises its answer. The debate temperature is 0.7 to encourage diverse initial proposals while promoting convergence. After the final round, each teacher provides a confidence score (integer in $[0, 100]$); each score is divided by $100$ to yield $\tilde{c}_k \in [0,1]$ and then softmax-normalized with temperature $\tau_{\mathrm{conf}}{=}1.0$ to produce the per-teacher weights (Eq.~\ref{eq:consensus_score}). The debate prompt is kept identical across all configurations and model families. App.~\ref{app:rounds_ablation} ablates $R \in \{0, 1, 2, 3\}$ and confirms $R{=}2$ as the optimal operating point.

\subsection{Evaluation Protocol}
\label{app:eval_protocol}
All methods are trained for 1 epoch. In large-scale LLM post-training, exhaustive checkpoint selection on fully separate validation suites is often prohibitively expensive, especially for agentic evaluation (which requires user-simulator API calls) and code evaluation (which requires per-checkpoint sandbox runs). We therefore adopt a practical checkpoint-selection protocol: checkpoints are evaluated every 10 steps for agentic tasks on \textbf{BFCL-v4} and every 5 steps for code tasks on \textbf{LCB-v6}, and the highest-scoring checkpoint for each task type is then evaluated once on all five benchmarks for the final reported numbers. The same protocol is applied to all methods. Under this protocol, BFCL-v4 and LCB-v6 serve as development benchmarks for checkpoint selection, while $\tau^2$-Bench, VitaBench, and MBPP+ are held out from model selection; this is consistent with common practice in prior OPD work~\citep{agarwal2024opd,song2026survey}.

Code-task pass@1 scores are averaged over 16 decoding seeds (subscript std in Tables~\ref{tab:main} and~\ref{tab:bon_results}); agentic-task scores are reported as a single run per the deterministic-simulator setup of each agentic benchmark, where the reported score is itself a mean over many internal subtasks or domains (8 subtasks for BFCL-v4, 3 domains $\times$ many tasks each for $\tau^2$-Bench and VitaBench), substantially reducing single-run variance. Following the Qwen3 technical report~\citep{yang2025qwen3} (recommended decoding parameters for non-thinking mode), we use sampling with temperature $0.7$, top-$p=0.8$, and presence penalty $1.2$ for all evaluations. Maximum generation lengths follow the defaults recommended by each benchmark: 4096 tokens for agentic tasks and 16384 tokens for code tasks. We additionally report Best-of-$N$ (BoN@16) results ($N=16$ samples) in App.~\ref{app:bon}. For the multi-turn user-simulation benchmarks ($\tau^2$-Bench and VitaBench), we use Claude-4.6-Opus as the user simulator with a maximum of 300 environment steps per episode. Per-benchmark details (subtask coverage, license, evaluation metric) are in Sections~\ref{app:bfcl_detail}--\ref{app:mbpp_detail}.

\subsection{Hyperparameter Summary}
\label{app:hyperparams}
Table~\ref{tab:config} consolidates all training and evaluation hyperparameters. The key task-dependent choices are the divergence function ($\jsd_{0.5}$ for agentic tasks, $D_{\kl}^{\leftarrow}$ for code) and the maximum sequence length (4096 vs.\ 16384 tokens). Decoding hyperparameters (temperature $0.7$, top-$p$ $0.8$, presence penalty $1.2$) follow the Qwen3 technical report's recommended settings for non-thinking mode; maximum generation lengths follow each benchmark's recommended defaults.

\begin{table}[t!]
\caption{Complete training and evaluation configuration.}
\label{tab:config}
\centering
\scriptsize
\begin{tabular}{lcc}
\toprule
\textbf{Parameter} & \textbf{Agentic Tasks} & \textbf{Code Tasks} \\
\midrule
\multicolumn{3}{l}{\textit{Training}} \\
Optimizer & AdamW ($\beta_1{=}0.9$, $\beta_2{=}0.999$) & AdamW ($\beta_1{=}0.9$, $\beta_2{=}0.999$) \\
Weight decay & 0.01 & 0.01 \\
Learning rate & $1{\times}10^{-5}$, cosine, 5\% warmup & $1{\times}10^{-5}$, cosine, 5\% warmup \\
Effective batch size & 128 & 128 \\
Gradient clipping & 1.0 & 1.0 \\
Max sequence length & 4096 & 16384 \\
Data & ToolACE (${\sim}$16K, step-split) & OpenThoughts3 (30K) \\
Thinking mode & Non-thinking & Non-thinking \\
\midrule
\multicolumn{3}{l}{\textit{Distillation}} \\
Teachers ($K$) & 2 & 2 \\
Debate rounds ($R$) & 2 & 2 \\
Debate temperature & 0.7 & 0.7 \\
Divergence ($D$) & $\jsd_{0.5}$ & $D_{\kl}^{\leftarrow}$ \\
Logit vocabulary & Full vocabulary & Full vocabulary \\
Logit precision & BF16 & BF16 \\
\midrule
\multicolumn{3}{l}{\textit{Infrastructure}} \\
Training GPUs & 8$\times$ NVIDIA H20 (ZeRO-3) & 8$\times$ NVIDIA H20 (ZeRO-3) \\
Teacher debate serving & vLLM (single H20) & vLLM (single H20) \\
Logit extraction & Sidecar (single H20) & Sidecar (single H20) \\
\midrule
\multicolumn{3}{l}{\textit{Evaluation}} \\
Temperature & 0.7 & 0.7 \\
Top-$p$ & 0.8 & 0.8 \\
Presence penalty & 1.2 & 1.2 \\
Max generation length & 4096 & 16384 \\
Decoding & Sampling & Sampling (pass@1) \\
BoN & -- & $N{=}16$ \\
\bottomrule
\end{tabular}
\end{table}

\subsection{Benchmark: BFCL-v4}
\label{app:bfcl_detail}
\textbf{Berkeley Function-Calling Leaderboard v4 (BFCL-v4)}~\citep{yan2024bfcl} evaluates LLMs' function-calling and tool-use capabilities through programmatically verified API invocations. The benchmark covers eight subtasks designed to stress different aspects of structured output:
\begin{itemize}[leftmargin=1.4em, itemsep=1pt, topsep=1pt]
  \item \textbf{Simple}: a single function call to a single API with explicit arguments;
  \item \textbf{Multiple}: selecting and calling one of several available functions based on the user query;
  \item \textbf{Parallel}: emitting concurrent calls to one function with different argument sets;
  \item \textbf{Parallel-Multiple}: emitting concurrent calls across multiple distinct functions;
  \item \textbf{Base}: multi-turn function calling within an ongoing dialogue (state preservation across turns);
  \item \textbf{Miss-Function}: the model must respond appropriately when no available function matches the user's intent;
  \item \textbf{Miss-Parameter}: the model must request missing required parameters rather than hallucinating values;
  \item \textbf{Long-Context}: function calling under extended context length, used as a stress test of retrieval-and-call ability.
\end{itemize}
Evaluation uses AST-based correctness checking on the emitted function call schema, comparing argument names, types, and values against ground truth. The reported BFCL-v4 score in Table~\ref{tab:main} is the unweighted mean across all eight subtasks. Available at \url{https://github.com/ShishirPatil/gorilla/tree/main/berkeley-function-call-leaderboard} under Apache 2.0 license.

\subsection{Benchmark: \texorpdfstring{$\tau^2$}{tau2}-Bench}
\label{app:tau_detail}
\textbf{$\tau^2$-Bench}~\citep{barres2025tau2bench} extends the original $\tau$-bench (which covered airline + retail) with a new dual-control \textbf{telecom} domain and a compositional task generator. It is a multi-turn agentic benchmark for customer-service interactions, organized into three industry domains:
\begin{itemize}[leftmargin=1.4em, itemsep=1pt, topsep=1pt]
  \item \textbf{Retail}: e-commerce account management (order returns, exchanges, refunds);
  \item \textbf{Airline}: flight reservation modifications, seat selection, fare changes;
  \item \textbf{Telecom}: account management, service plan changes, troubleshooting flows.
\end{itemize}
Each task gives the agent a domain-specific toolset (e.g., \texttt{update\_order}, \texttt{modify\_reservation}) and a domain policy document. The agent must execute the user's request via tool calls while strictly adhering to the policy and tracking evolving user requirements over multiple turns. We use \textbf{Claude-4.6-Opus} as the user simulator and cap each episode at 300 environment steps. Success is graded by the official $\tau^2$-bench scorer (exact match of the resulting database state against the ground-truth target state). \textbf{Scoring protocol.} We follow the benchmark's default accuracy definition: episodes that fail to execute (e.g., simulator API errors, malformed tool calls causing crashes) are excluded from the denominator, and we only score normally-executed episodes; each evaluation run is required to have $\geq 90\%$ normally-executed episodes to be counted as valid (otherwise we re-run that configuration). The reported $\tau^2$ score in Table~\ref{tab:main} is the pass@1 success rate averaged over the three domains. Available at \url{https://github.com/sierra-research/tau2-bench} under MIT license. We use the latest release of the repository, which includes 75+ task fixes for airline / retail; the telecom domain is the original $\tau^2$ implementation. The voice and banking-knowledge modalities of the latest release are not used.

\subsection{Benchmark: VitaBench}
\label{app:vita_detail}
\textbf{VitaBench}~\citep{he2025vitabench} is a versatile interactive task benchmark targeting real-world life-service scenarios. It exposes 66 tools across three domains:
\begin{itemize}[leftmargin=1.4em, itemsep=1pt, topsep=1pt]
  \item \textbf{Food delivery}: restaurant search, menu navigation, order placement, delivery tracking;
  \item \textbf{In-store consumption}: in-shop ordering, payment, inventory checks, loyalty programs;
  \item \textbf{Online travel agency (OTA)}: flight/hotel/itinerary search and booking, modification, refunds.
\end{itemize}
The benchmark is constructed via a domain-policy-free framework that flexibly composes scenarios and tools, comprising 100 cross-scenario tasks (used as the main evaluation) plus 300 single-scenario tasks. Each task is derived from real user requests and requires the agent to reason across temporal and spatial dimensions, use complex tool sets, proactively clarify ambiguous instructions, and track shifting user intent throughout multi-turn conversations. Evaluation uses the official VitaBench rubric-based sliding-window scorer that supports robust assessment under stochastic interactions and diverse solution paths. We use \textbf{Claude-4.6-Opus} as the user simulator and cap each episode at 300 environment steps. \textbf{Scoring protocol.} We follow the benchmark's default accuracy definition: episodes that fail to execute (e.g., simulator API errors, malformed tool calls causing crashes) are excluded from the denominator, and we only score normally-executed episodes; each evaluation run is required to have $\geq 90\%$ normally-executed episodes to be counted as valid (otherwise we re-run that configuration). The reported Vita score in Table~\ref{tab:main} is the success rate averaged over the three scenarios. Available at \url{https://vitabench.github.io/}.

\subsection{Benchmark: LiveCodeBench v6}
\label{app:lcb_detail}
\textbf{LiveCodeBench v6 (LCB-v6)}~\citep{jain2024livecodebench} is a contamination-free code generation benchmark. Problems are continuously sourced from competitive programming platforms (LeetCode, AtCoder, Codeforces) with timestamp-based filtering: each release window includes only problems published \emph{after} the cutoff date of all evaluated models, eliminating training-data contamination by construction. Version v6 is the latest release at the time of our experiments. Each problem provides a natural-language task description and a hidden test suite (typically 20--100 hidden tests). Following the official protocol, generations are run through the hidden tests under a per-test execution timeout. We report \textbf{pass@1} (single-sample success rate, decoding at temperature 0.7 / top-$p$ 0.8) as the primary metric and \textbf{BoN@16} (best of 16 sampled solutions per problem at temperature 0.7) as a multi-sample measure. All scores are means over 16 independent random seeds with std reported as subscript. Available at \url{https://github.com/LiveCodeBench/LiveCodeBench} under Apache 2.0 license.

\subsection{Benchmark: MBPP+}
\label{app:mbpp_detail}
\textbf{MBPP+}~\citep{liu2023evalplus} is the EvalPlus extension of \textbf{MBPP}~\citep{austin2021mbpp} (Mostly Basic Python Problems). EvalPlus augments the original MBPP with extensive test cases generated through differential testing and mutation-based input synthesis (approximately $5\times$ more tests per problem on average). The augmented test suite catches code-correctness failures (e.g., off-by-one errors, edge-case mishandling) that the original MBPP tests miss, providing a substantially more reliable signal of functional correctness. The benchmark contains 378 sanitized programming problems covering basic Python data structures, algorithms, and string/list manipulation; each problem provides a natural-language description, a function signature, and the augmented test suite. We report \textbf{pass@1} and \textbf{BoN@16} on the same protocol as LCB-v6 (16 seeds, std subscript). Available at \url{https://github.com/evalplus/evalplus} under Apache 2.0 license.

\section{Additional Experiment Results}
\label{app:additional_results}

This appendix consolidates supplementary empirical evidence; each subsection indicates which research question (Sec.~\ref{sec:exp}) it supports.

\subsection{Component Analysis: Full Numbers (RQ3)}
\label{app:component_full}

Table~\ref{tab:components} reports the per-component ablation underlying Figure~\ref{fig:ablation_compact}(a) on the 14B+8B$\to$4B configuration. Components are added incrementally, isolating the contribution of each.

\begin{table}[t!]
\caption{\textbf{Component analysis} on \ours{}'s most-improved setting (14B+8B$\to$4B). Components are added incrementally: \textbf{Multi-T} = multi-teacher aggregation; \textbf{On-Policy} = student-rollout supervision (vs.\ off-policy SeqKD); \textbf{Debate} = inter-teacher debate transcript as privileged information; \textbf{Conf.\ Wt.} = post-debate confidence-based teacher weighting (Eq.~\ref{eq:consensus_score}). \textbf{Bold} = best, \underline{underline} = second-best.}
\label{tab:components}
\centering
\small
\setlength{\tabcolsep}{6pt}
\begin{tabular}{l cccc ccc}
\toprule
\textbf{Method} & Multi-T & On-Policy & Debate & Conf.\ Wt. & Ag-Avg & Co-Avg & Avg \\
\midrule
Base &  &  &  &  & 18.65 & 41.50 & 27.79 \\
MT-SeqKD & \checkmark &  &  &  & 21.48 & 40.77 & 29.19 \\
OPD &  & \checkmark &  &  & 23.26 & 44.41 & 31.72 \\
MT-OPD & \checkmark & \checkmark &  &  & \underline{24.14} & 41.97 & 31.27 \\
\ours{} w/o Conf.\ Wt. & \checkmark & \checkmark & \checkmark &  & 23.95 & \underline{46.60} & \underline{33.01} \\
\textbf{\ours{}} & \checkmark & \checkmark & \checkmark & \checkmark & \textbf{25.69} & \textbf{48.12} & \textbf{34.66} \\
\bottomrule
\end{tabular}
\end{table}

\paragraph{Robustness of confidence weighting.}
The Conf.\ Wt.\ increment ($+1.74\%$ Ag-Avg, $+1.52\%$ Co-Avg over the uniform-weight ablation) does not require teachers' self-reported confidence to be well calibrated. Two structural reasons. First, the softmax in Eq.~\ref{eq:consensus_score} cancels any teacher-shared bias in absolute confidence scale: only the \emph{relative} ordering of $\{c_k\}$ enters the gradient. Second, the increment requires only that relative confidence ordering correlates with relative quality on average, not that any single $c_k$ is accurate; the empirical gain over the uniform-weight ablation indicates this correlation holds in practice across the $\sim$16K agentic and $30$K code training instances. In the worst case where confidence becomes uninformative (uniform $\{c_k\}$), Eq.~\ref{eq:consensus_score} degenerates to $w_k = 1/K$ and the loss reduces to MT-OPD, which itself already exceeds Base on every configuration of Table~\ref{tab:main}. A systematic per-step calibration study is left to future work.

\subsection{Divergence Ablation: Full Numbers (RQ4)}
\label{app:divergence_full}

Table~\ref{tab:divergence} reports the full divergence ablation summarized in Sec.~\ref{sec:ablation}. We fix all hyperparameters except $D$ on the 14B+8B$\to$4B configuration; the JSD-agentic and Rev-KL-code rows correspond to the default \ours{} configuration and therefore match the main \ours{} row in Table~\ref{tab:main}, while the off-diagonal rows isolate each non-default $D$. The dominance pattern inverts along the agentic/code split, as predicted by Remark~\ref{rem:task_dep}.

\begin{table}[t!]
\caption{\textbf{Divergence ablation} on \ours{} (14B+8B$\to$4B). JSD leads on agentic tasks; reverse KL leads on code, validating the task-dependent selection (Remark~\ref{rem:task_dep}). Code metrics report pass@1 averaged over 16 seeds (subscript: std), matching the main-table protocol; the JSD-agentic and Rev-KL-code rows correspond to the default \ours{} configuration in Table~\ref{tab:main}.}
\label{tab:divergence}
\centering
\small
\begin{tabular}{l ccc c cc c}
\toprule
& \multicolumn{4}{c}{\textbf{Agentic} ($D{=}\jsd$)} & \multicolumn{3}{c}{\textbf{Code} ($D{=}D_{\kl}^{\leftarrow}$)} \\
\cmidrule(lr){2-5} \cmidrule(lr){6-8}
\textbf{Divergence $D$} & BFCL-v4 & $\tau^2$ & Vita & Ag-Avg & LCB-v6 & MBPP+ & Co-Avg \\
\midrule
Fwd KL ($D_{\kl}^{\to}$) & 23.70 & \underline{26.24} & 15.21 & 21.72 & 24.29{\tiny$_{\pm 0.73}$} & 57.03{\tiny$_{\pm 1.13}$} & 40.66 \\
Rev KL ($D_{\kl}^{\leftarrow}$) & \underline{25.02} & 25.85 & \underline{15.62} & \underline{22.16} & \textbf{29.83}{\tiny$_{\pm 0.56}$} & \textbf{66.42}{\tiny$_{\pm 0.92}$} & \textbf{48.12} \\
JSD ($\jsd_\beta$) & \textbf{27.08} & \textbf{31.46} & \textbf{18.53} & \textbf{25.69} & \underline{24.92}{\tiny$_{\pm 0.92}$} & \underline{64.50}{\tiny$_{\pm 0.95}$} & \underline{44.71} \\
\bottomrule
\end{tabular}
\end{table}

\subsection{Student Outperforms Teachers on LCB-v6 (RQ1)}
\label{app:lcb_surpass}

Figure~\ref{fig:lcb_bubble} reports an accuracy--efficiency view of the 14B+8B$\to$4B configuration on LCB-v6 (pass@1 over 16 seeds, mean $\pm$ std). We use LCB-v6 rather than MBPP+ for this efficiency view because MBPP+ tasks are short and elicit very few output tokens per problem, leaving the token-cost axis essentially flat across methods and uninformative; LCB-v6 problems generate substantially longer solutions, exposing meaningful per-method token-budget differences.

The 4B student trained with \ours{} reaches $29.83\%$ pass@1, exceeding the 14B teacher ($25.57\%$) and the 8B teacher ($20.93\%$): an absolute gain of $+4.26\%$ over the strongest teacher despite using $3.5\times$ fewer parameters and at competitive token cost vs.\ all distillation baselines. The advantage extends beyond pass@1: under BoN@16 the same 4B student reaches $43.43\%$, outperforming the 14B teacher's BoN@16 ($33.14\%$) by $+10.29\%$. No other distillation baseline crosses the 14B-teacher pass@1 ceiling: OPD lands at $25.00\%$, MT-OPD at $23.29\%$, MT-SeqKD at $23.18\%$. The result shows that confidence-weighted multi-teacher debate produces supervision that exceeds the strongest single teacher in its own pool, both in single-shot and multi-sample selection regimes.

\begin{figure}[t!]
\centering
\includegraphics[width=\linewidth]{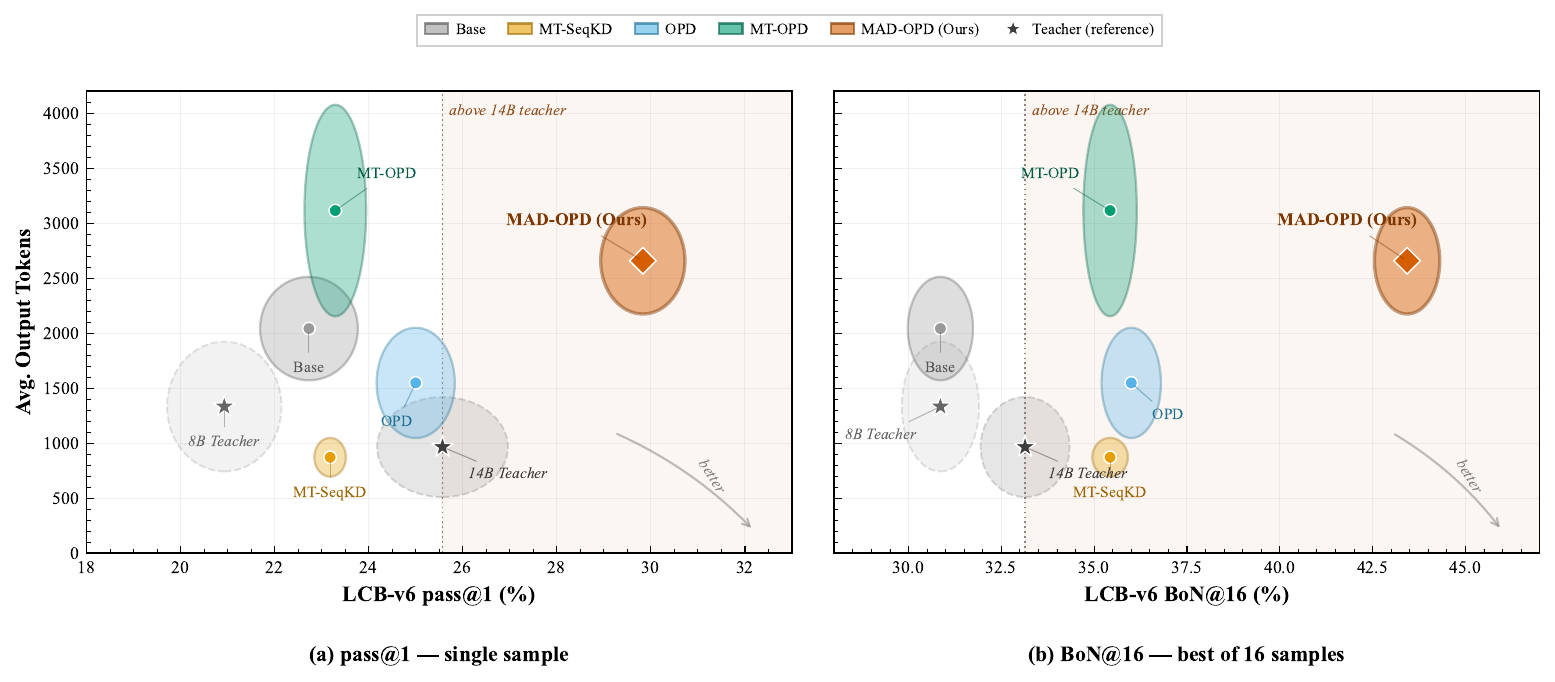}
\caption{\textbf{Accuracy--efficiency on LCB-v6 (14B+8B$\to$4B, 16 seeds).} Both panels share the y-axis (avg.\ output tokens, lower is better) for direct token-cost comparison. Each ellipse spans $\pm 1$ standard deviation; markers are seed means; teacher reference points (gray stars) are dashed. The vertical dotted line and shaded region mark the \emph{14B-teacher ceiling}. \textbf{(a)~pass@1 (single sample)}: only \ours{} crosses the $25.57\%$ ceiling, reaching $29.83\%$ ($+4.26\%$ over the strongest teacher, $+4.83\%$ over OPD). \textbf{(b)~BoN@16 (best of 16 samples)}: \ours{} again clears the $33.14\%$ teacher ceiling, reaching $43.43\%$ ($+10.29\%$ over the 14B teacher, $+7.43\%$ over OPD); the dominance is more pronounced under multi-sample selection. In both panels, token cost is on par with MT-OPD and below MT-SeqKD's regime, ruling out token inflation as the source of the gain.}
\label{fig:lcb_bubble}
\end{figure}

\subsection{Best-of-N Results for Code Benchmarks (RQ1)}
\label{app:bon}

In addition to pass@1, we evaluate code generation quality using the Best-of-$N$ (BoN) protocol with $N{=}16$. For each problem, we sample 16 candidate solutions (temperature $0.7$) and select the one that passes the most test cases. Table~\ref{tab:bon_results} reports BoN@16 on both LCB-v6 and MBPP+ across all six configurations. \emph{On LCB-v6}, \ours{} achieves the best (or tied-best) BoN@16 in 5 of 6 configurations; MT-SeqKD leads only on 32B+30B$\to$8B (by a $1.14$-point margin) and ties with \ours{} on Qwen3.5 27B+9B$\to$4B, likely because its off-policy training produces more diverse outputs that benefit from best-of-$N$ selection. \emph{On MBPP+}, \ours{} ranks first in every configuration.

\begin{table}[t!]
\caption{\textbf{Code BoN@16 results} on LCB-v6 and MBPP+ across all configurations ($N{=}16$ samples per problem). \textbf{Bold}: best per configuration per benchmark; \underline{underline}: second best. See Table~\ref{tab:teacher_base} for the corresponding teacher BoN@16 reference numbers.}
\label{tab:bon_results}
\centering
\renewcommand{\arraystretch}{0.92}
\scriptsize
\setlength{\tabcolsep}{4pt}
\begin{tabular*}{\textwidth}{@{\extracolsep{\fill}} l cc ccc c @{}}
\toprule
& \multicolumn{2}{c}{\textbf{Qwen3 14B{+}8B}} & \multicolumn{3}{c}{\textbf{Qwen3 32B{+}30B\text{-}A3B}} & \textbf{Qwen3.5 27B{+}9B} \\
\cmidrule(lr){2-3} \cmidrule(lr){4-6} \cmidrule(lr){7-7}
\textbf{Method} & $\to$1.7B & $\to$4B & $\to$4B & $\to$8B & $\to$14B & $\to$4B \\
\midrule
\multicolumn{7}{l}{\cellcolor{gray!8}\textit{LCB-v6 BoN@16}} \\
\cmidrule{1-7}
MT-SeqKD   & \underline{28.00} & 35.43 & 36.57 & \textbf{40.57} & 42.86 & \textbf{49.71} \\
OPD        & \textbf{31.43} & \underline{36.00} & 36.00 & 36.00 & 38.92 & 41.14 \\
MT-OPD     & 26.29 & 35.43 & \underline{38.86} & \underline{39.43} & \underline{47.43} & \underline{49.41} \\
\textbf{\ours{}} & \textbf{31.43} & \textbf{43.43} & \textbf{43.43} & \underline{39.43} & \textbf{49.71} & \textbf{49.71} \\
\midrule
\multicolumn{7}{l}{\cellcolor{gray!8}\textit{MBPP+ BoN@16}} \\
\cmidrule{1-7}
MT-SeqKD   & 61.90 & 70.37 & \underline{71.62} & 69.58 & 77.78 & 66.67 \\
OPD        & \underline{66.67} & \underline{73.81} & 67.73 & 73.54 & \underline{80.95} & 75.13 \\
MT-OPD     & 65.18 & 67.72 & 69.58 & \underline{75.40} & 80.42 & \underline{81.48} \\
\textbf{\ours{}} & \textbf{67.73} & \textbf{74.34} & \textbf{73.81} & \textbf{76.46} & \textbf{81.69} & \textbf{81.69} \\
\bottomrule
\end{tabular*}
\end{table}

\subsection{Training Loss Stability under Different Divergences (RQ4)}
\label{app:loss_stability}

Figure~\ref{fig:loss_stability} visualizes the per-step training loss for \ours{} and OPD on agentic and code tasks under each of the three divergences (Forward KL, JSD, Reverse KL). \textbf{Agentic.} Under reverse KL, $\mathcal{L}_{\mathrm{MAD}}$ swings across two orders of magnitude: the empirical signature of the unbounded per-token logit gradient derived in Prop.~\ref{prop:trajectory_var}.2, which fires whenever the privileged teacher disprefers a student-sampled token. JSD stays bounded throughout, consistent with the per-position gradient bound of Prop.~\ref{prop:trajectory_var}.1. \textbf{Code.} The dynamics invert: reverse KL converges smoothly because mode concentration (Lemma~\ref{lem:rev_kl_mode}, Prop.~\ref{prop:rev_kl_code}.1) matches the single-mode structure that pass@1 rewards, whereas JSD's interpolating geometry retains mass on secondary modes (Prop.~\ref{prop:rev_kl_code}.3) and produces incoherent splices across implementations.

\begin{figure}[t!]
\centering
\includegraphics[width=\linewidth]{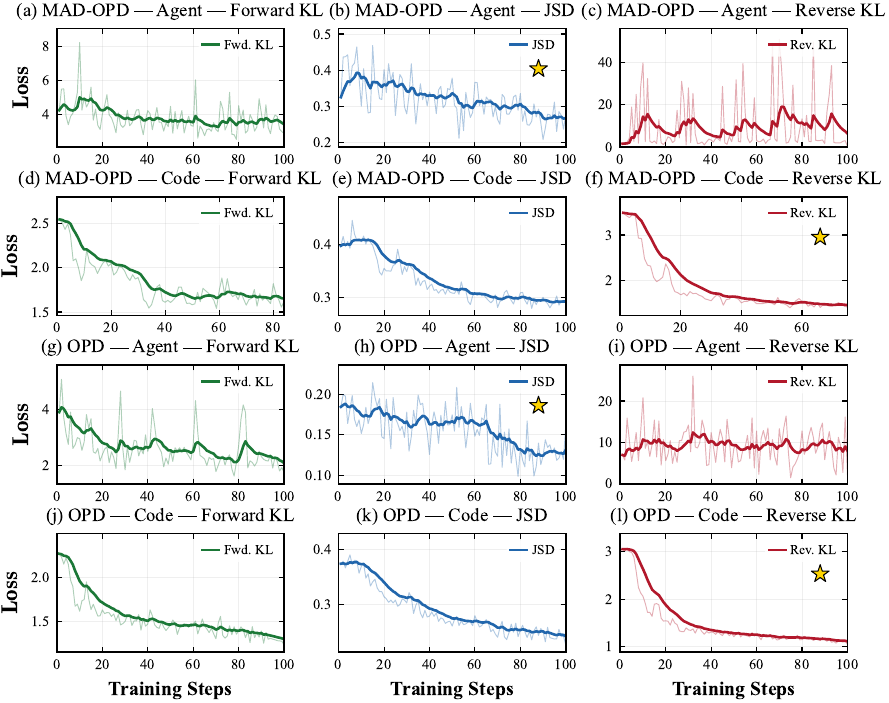}
\caption{\textbf{Training loss curves} for \ours{} (rows~1--2) and OPD (rows~3--4) on agentic and code tasks under three divergences (Forward KL, JSD, Reverse KL), 14B+8B$\to$4B. X-axes are gradient steps (gradient accumulation $=8$, so absolute loss values are roughly $8\times$ the per-micro-batch value), truncated to the first $100$. Each panel overlays the per-step training loss (light) with an exponential moving average (solid) using smoothing factor $\alpha = 0.15$ (i.e.\ $s_t = 0.15\,y_t + 0.85\,s_{t-1}$, $s_0 = y_0$). \textbf{Agentic panels (rows~1, 3)}: reverse KL (column~3) exhibits high-variance spikes consistent with the unbounded per-token logit gradient under privileged $p$--$q$ gaps (Prop.~\ref{prop:trajectory_var}.2), whereas JSD (column~2) remains stable (Prop.~\ref{prop:trajectory_var}.1). \textbf{Code panels (rows~2, 4)}: reverse KL converges smoothly, matching the mode-seeking behavior favored by single-mode code distributions.}
\label{fig:loss_stability}
\end{figure}

\subsection{Effect of Debate Rounds \texorpdfstring{$R$}{R} (RQ3)}
\label{app:rounds_ablation}

Figure~\ref{fig:rounds_ablation} ablates the number of debate rounds $R \in \{0, 1, 2, 3\}$ on the 14B+8B$\to$4B configuration across all five benchmarks. $R{=}0$ removes teacher generation entirely (single-teacher OPD); $R{=}1$ uses only the independent round and recovers MT-OPD under uniform weights; $R{\geq}2$ activates inter-teacher revision. $R{=}2$ emerges as the empirical optimum on every benchmark, while $R{=}3$ overshoots: the extra round inflates the prompt context and introduces noise that erodes the gains (Ag-Avg drops by $-8.27\%$ relative to $R{=}2$).

\begin{figure}[t!]
\centering
\includegraphics[width=\linewidth]{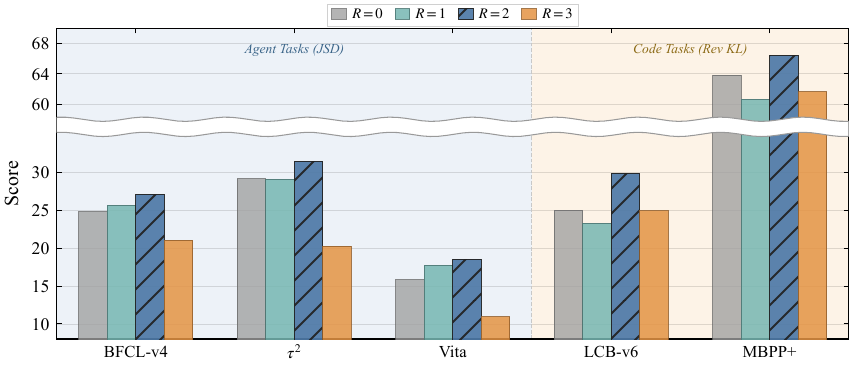}
\caption{\textbf{Effect of debate rounds $R$} on the 14B+8B$\to$4B configuration. $R{=}0$: no teacher generation (single-teacher OPD). $R{=}1$: independent only (MT-OPD). $R{=}2$ (hatched, default): one inter-teacher revision round. $R{=}3$ degrades sharply on agentic tasks as the extra round inflates the prompt context. Background shading separates agentic tasks (left, JSD) from code tasks (right, reverse KL). The wavy break compresses the y-axis between $35$ and $58$ to show both regimes at usable resolution.}
\label{fig:rounds_ablation}
\end{figure}

\subsection{Teacher Base Performance Reference}
\label{app:teacher_base}

Table~\ref{tab:teacher_base} consolidates the undistilled (Base) performance of every teacher used in the six configurations of Sec.~\ref{sec:setup}. These numbers serve as the single-teacher upper bound that our \textbf{OPD} baseline already attains and that \ours{} aims to exceed via debate. The 30B-A3B teacher refers to the \texttt{Qwen3-30B-A3B-Instruct-2507} release; all other teachers use the standard Instruct checkpoints of the corresponding size (App.~\ref{app:details}).

\begin{table}[t!]
\caption{\textbf{Teacher base performance.} Single-model evaluation of the six teachers across all five benchmarks. $\tau^2$ and Vita report the three-domain means defined in App.~\ref{app:tau_detail} / App.~\ref{app:vita_detail}. Code metrics: pass@1 with std as subscript over 16 seeds, and BoN@16 with $N{=}16$ samples per problem. The Qwen3.5 family exhibits a strong baseline on these tasks, leaving smaller per-method headroom for the Qwen3.5 27B+9B$\to$4B configuration discussed in Sec.~\ref{sec:scaling}.}
\label{tab:teacher_base}
\centering
\small
\setlength{\tabcolsep}{4pt}
\begin{tabular}{l ccc cc cc}
\toprule
& \multicolumn{3}{c}{\textbf{Agentic}} & \multicolumn{2}{c}{\textbf{Code pass@1}} & \multicolumn{2}{c}{\textbf{Code BoN@16}} \\
\cmidrule(lr){2-4} \cmidrule(lr){5-6} \cmidrule(lr){7-8}
\textbf{Teacher} & BFCL-v4 & $\tau^2$ & Vita & LCB-v6 & MBPP+ & LCB-v6 & MBPP+ \\
\midrule
\multicolumn{8}{l}{\cellcolor{gray!8}\textit{Qwen3 family}} \\
\cmidrule{1-8}
Qwen3-8B            & 28.94 & 30.08 & 17.16 & 20.93{\tiny$_{\pm 0.76}$} & 63.56{\tiny$_{\pm 0.61}$} & 30.86 & 70.90 \\
Qwen3-14B           & 32.46 & 33.70 & 21.70 & 25.57{\tiny$_{\pm 0.87}$} & 75.13{\tiny$_{\pm 0.78}$} & 33.14 & 80.16 \\
Qwen3-30B-A3B$^{\dagger}$  & 35.72 & 48.88 & 24.90 & 36.00{\tiny$_{\pm 0.92}$} & 77.87{\tiny$_{\pm 0.81}$} & 51.43 & 80.95 \\
Qwen3-32B           & 38.01 & 48.39 & 25.67 & 27.43{\tiny$_{\pm 0.84}$} & 78.21{\tiny$_{\pm 0.65}$} & 36.00 & 81.75 \\
\midrule
\multicolumn{8}{l}{\cellcolor{gray!8}\textit{Qwen3.5 family}} \\
\cmidrule{1-8}
Qwen3.5-9B          & 36.97 & 49.33 & 11.72 & 39.43{\tiny$_{\pm 1.13}$} & 63.36{\tiny$_{\pm 1.06}$} & 55.43 & 79.37 \\
Qwen3.5-27B         & 49.04 & 56.54 & 11.87 & 43.43{\tiny$_{\pm 0.48}$} & 72.21{\tiny$_{\pm 0.97}$} & 65.71 & 81.22 \\
\bottomrule
\end{tabular}\\[2pt]
{\footnotesize $^{\dagger}$\,\texttt{Qwen3-30B-A3B-Instruct-2507} release.}
\end{table}

\section{Detailed Proofs}
\label{app:derivations}

This appendix provides complete proofs for the lemmas and propositions in Section~\ref{sec:prelim}. \S\ref{app:opd_grad} derives the dense OPD gradient (no REINFORCE term). \S\ref{app:proof_lem_jsd}--\ref{app:proof_lem_revkl} prove the per-divergence Lemmas~\ref{lem:jsd_bound}~(JSD loss/gradient bounds) and~\ref{lem:rev_kl_mode}~(reverse-KL mode concentration). \S\ref{app:proof_prop_traj} proves Proposition~\ref{prop:trajectory_var}, comparing the per-token logit-gradient behaviour of all three divergences under privileged $p$--$q$ gaps. \S\ref{app:proof_prop_code} proves Proposition~\ref{prop:rev_kl_code}, the mode-coverage analysis for code generation. \S\ref{app:fwd_kl_instability} supplements with forward-KL's mode-covering failure mode in agentic settings.

\subsection{OPD Gradient Derivation}
\label{app:opd_grad}

\begin{proof}
At each position $t$, $D(p \| q)$ depends on $\theta$ only through $q = \pi_\theta(\cdot | x, \hat{y}_{<t})$; the privileged teacher $p = \pi^*(\cdot | x, c, \hat{y}_{<t})$ is a stop-gradient target. Teacher-forcing fixes $\hat{y}_{<t}$ as non-differentiable context, dropping the REINFORCE score-function term $G \cdot \nabla \log \pi$ (with $G$ the trajectory return), so that
\begin{equation}
\nabla_\theta \mathcal{L}_{\textsc{opd}}(\theta) \;=\; \E_{x,\,\hat{y}\sim\pi_\theta}\!\left[\frac{1}{|\hat{y}|}\sum_{t=1}^{|\hat{y}|} \nabla_\theta D(p \,\|\, q)\right].
\label{eq:opd_dense_grad}
\end{equation}
As an illustration, decomposing forward KL as $D_{\kl}^{\to}(p\|q) = H(p,q) - H(p)$ (with $H(p)$ constant in $z$) and applying the softmax Jacobian $\partial q(v)/\partial z_i = q(v)(\delta_{iv}{-}q(i))$ collapses the chain rule to
\begin{equation}
\nabla_{z_i} D_{\kl}^{\to}(p\|q) \;=\; -\sum_v p(v)(\delta_{iv}{-}q(i)) \;\overset{\sum p = 1}{=}\; q(i) - p(i),
\label{eq:fwd_kl_grad_form}
\end{equation}
the standard ``$q-p$'' cross-entropy gradient with $|q(i)-p(i)| \leq 1$ uniformly. Stability thus reduces to the per-position bound $\|\nabla_{z_t} D\|_\infty$, derived per divergence in \S\ref{app:proof_lem_jsd}--\ref{app:proof_prop_traj}.
\end{proof}

\subsection{Proof of Lemma~\ref{lem:jsd_bound} (JSD Bounded Loss and Bounded Logit Gradient)}
\label{app:proof_lem_jsd}

\textbf{Restatement.} \emph{For any distributions $p, q$ over $\mathcal{V}$ and any $\beta \in (0,1)$, (i) $\jsd_\beta(p\|q) \in [0, H(\beta)]$ with $H(\beta) = -\beta\log\beta - (1{-}\beta)\log(1{-}\beta)$; (ii) $\|\nabla_z \jsd_\beta(p\|q)\|_\infty \leq 2$ uniformly over all $p, q$.}

\begin{proof}
Let $m = \beta p + (1{-}\beta)q$ and $q(v) = \mathrm{softmax}(z)_v$, with $0\log 0 := 0$; softmax gives $q(v) > 0$, hence $m(v) \geq (1{-}\beta)q(v) > 0$ everywhere.

\textbf{Part 1: Loss boundedness.} From $m(v) \geq \beta p(v)$ we have $\tfrac{p(v)}{m(v)} \leq \tfrac{1}{\beta}$, hence $\log\tfrac{p(v)}{m(v)} \leq \log\tfrac{1}{\beta}$, so
\begin{equation}
D_{\kl}(p\|m) \;=\; \sum_v p(v)\log\tfrac{p(v)}{m(v)} \;\leq\; \sum_v p(v)\log\tfrac{1}{\beta} \;=\; \log\tfrac{1}{\beta},
\label{eq:per_kl_bounds}
\end{equation}
and symmetrically $D_{\kl}(q\|m) \leq \log\tfrac{1}{1{-}\beta}$ from $m(v) \geq (1{-}\beta)q(v)$. Combining and writing $H(\beta) := -\beta\log\beta - (1{-}\beta)\log(1{-}\beta)$,
\begin{equation}
\jsd_\beta(p\|q) \;=\; \beta\,D_{\kl}(p\|m) + (1{-}\beta)\,D_{\kl}(q\|m) \;\leq\; H(\beta),
\label{eq:jsd_loss_bound}
\end{equation}
so $\jsd_{0.5} \leq \log 2$; the lower bound $\jsd_\beta \geq 0$ follows from $D_{\kl} \geq 0$.

\textbf{Part 2: Gradient boundedness.}

\emph{Step 1: Decomposition.} Logits $z$ enter $\jsd_\beta$ through $q$ only ($p$ fixed); by linearity,
\begin{equation}
\nabla_{z_i}\jsd_\beta \;=\; \beta\,\nabla_{z_i} D_{\kl}(p\|m) \;+\; (1{-}\beta)\,\nabla_{z_i} D_{\kl}(q\|m), \label{eq:jsd_decomp}
\end{equation}
with softmax Jacobian
\begin{equation}
\frac{\partial q(v)}{\partial z_i} = q(v)(\delta_{iv}{-}q(i)), \qquad \frac{\partial m(v)}{\partial z_i} = (1{-}\beta)\,q(v)(\delta_{iv}{-}q(i)). \label{eq:softmax_jac}
\end{equation}

\emph{Step 2: $\nabla_{z_i} D_{\kl}(p\|m)$.} Only $-\sum_v p(v)\log m(v)$ depends on $z$. Chain rule with $\nabla_{z_i}\log m(v) = \tfrac{1}{m(v)}\tfrac{\partial m(v)}{\partial z_i}$ and \eqref{eq:softmax_jac}:
\begin{equation}
\nabla_{z_i} D_{\kl}(p\|m) \;=\; -\sum_v \tfrac{p(v)}{m(v)} \cdot \tfrac{\partial m(v)}{\partial z_i} \;=\; -(1{-}\beta)\sum_v \tfrac{p(v)\,q(v)\,(\delta_{iv}{-}q(i))}{m(v)}. \label{eq:jsd_term_p}
\end{equation}

\emph{Step 3: $\nabla_{z_i} D_{\kl}(q\|m)$.} Expanding $D_{\kl}(q\|m) = \sum_v q(v)\log\tfrac{q(v)}{m(v)}$ and applying the product rule (using $\sum_v \partial q(v)/\partial z_i = 0$):
\begin{equation}
\nabla_{z_i} D_{\kl}(q\|m) \;=\; \underbrace{\sum_v q(v)(\delta_{iv}{-}q(i))\log\tfrac{q(v)}{m(v)}}_{=:\,X} \;-\; \underbrace{(1{-}\beta)\sum_v \tfrac{q(v)^2(\delta_{iv}{-}q(i))}{m(v)}}_{=:\,Y}. \label{eq:jsd_term_q}
\end{equation}

\emph{Step 4: Bound on $|\beta\,\nabla_{z_i} D_{\kl}(p\|m)|$.} From \eqref{eq:jsd_term_p} with $|\delta_{iv}{-}q(i)| \leq 1$, then weighted AM--GM ($m \geq p^\beta q^{1-\beta}$) followed by Young's inequality:
\begin{equation}
\tfrac{p(v)q(v)}{m(v)} \;\leq\; p(v)^{1-\beta}q(v)^\beta \;\leq\; (1{-}\beta)p(v) + \beta\,q(v) \;\;\Longrightarrow\;\; \textstyle\sum_v \tfrac{p(v)q(v)}{m(v)} \leq 1.
\label{eq:amgm_young}
\end{equation}
Hence
\begin{equation}
|\beta\,\nabla_{z_i} D_{\kl}(p\|m)| \;\leq\; \beta(1{-}\beta) \;\leq\; \tfrac{1}{4}. \label{eq:term_p_bound}
\end{equation}

\emph{Step 5: Bound on $|(1{-}\beta)\,\nabla_{z_i} D_{\kl}(q\|m)|$.} Splitting the indicator in \eqref{eq:jsd_term_q} (the $v=i$ term plus $\sum_{v\neq i}$ collapsed via $\sum_v q(v) = 1$) yields
\begin{equation}
X \;=\; q(i)\log\tfrac{q(i)}{m(i)} \;-\; q(i)\,D_{\kl}(q\|m).
\label{eq:X_decomp}
\end{equation}
For the first term, case-split on $\mathrm{sign}\!\log\tfrac{q(i)}{m(i)}$:
\begin{equation}
\big|q(i)\log\tfrac{q(i)}{m(i)}\big| \;\leq\; \begin{cases}
q(i)\log\tfrac{1}{1{-}\beta} \;\leq\; \log\tfrac{1}{1{-}\beta} & \text{if } q(i) \geq m(i),\\[2pt]
m(i){-}q(i) = \beta(p(i){-}q(i)) \leq \beta \leq \log\tfrac{1}{1{-}\beta} & \text{if } q(i) \leq m(i),
\end{cases}
\label{eq:X_pieces}
\end{equation}
where the first case uses $m(i)\geq(1{-}\beta)q(i)$ and the second uses $\log x \leq x{-}1$ together with Taylor expansion. For the second term, $q(i)\,D_{\kl}(q\|m) \leq D_{\kl}(q\|m) \leq \log\tfrac{1}{1{-}\beta}$ by Part~1. Triangle inequality on \eqref{eq:X_decomp}:
\begin{equation}
|X| \;\leq\; 2\log\tfrac{1}{1{-}\beta}.
\label{eq:X_bound}
\end{equation}
For $|Y|$, $q(v)^2/m(v) \leq q(v)/(1{-}\beta)$ and $|\delta_{iv}{-}q(i)| \leq 1$ yield $|Y| \leq 1$. Triangle inequality on $X-Y$, then prefactor $(1{-}\beta)$:
\begin{equation}
|(1{-}\beta)\,\nabla_{z_i} D_{\kl}(q\|m)| \;\leq\; (1{-}\beta)\Big(2\log\tfrac{1}{1{-}\beta} + 1\Big) \;\leq\; \tfrac{2}{\sqrt{e}}, \label{eq:term_q_bound}
\end{equation}
where $\max_{x\in(0,1)} x(1 - 2\log x) = 2/\sqrt{e}$ at $x = e^{-1/2}$.

\emph{Step 6: Combining.} Triangle inequality on \eqref{eq:jsd_decomp} via \eqref{eq:term_p_bound}, \eqref{eq:term_q_bound}:
\begin{equation}
\|\nabla_z \jsd_\beta(p\|q)\|_\infty \;\leq\; \tfrac{1}{4} + \tfrac{2}{\sqrt{e}} \;\approx\; 1.46 \;<\; 2,
\label{eq:jsd_final_bound}
\end{equation}
uniformly in $\beta\in(0,1), p, q$.
\end{proof}

\subsection{Proof of Lemma~\ref{lem:rev_kl_mode} (Reverse KL Mode Concentration)}
\label{app:proof_lem_revkl}

\textbf{Restatement.} \emph{For a mixture $p = \sum_{j=1}^J \alpha_j p_j$ with $J$ well-separated modes (disjoint supports), the reverse-KL minimizer $q^* = \arg\min_q D_{\kl}(q\|p)$ over softmax-parameterized distributions concentrates on the dominant mode $p_{j^*}$, $j^* = \arg\max_j \alpha_j$, with optimal cost $D_{\kl}(q^*\|p) = -\log \alpha_{j^*}$.}

\begin{proof}
Let $\{p_j\}$ have pairwise disjoint supports $\mathcal{S}_j \subset \mathcal{V}$ and write the reverse KL in cross-entropy form:
\begin{equation}
D_{\kl}(q\|p) \;=\; H(q,p) - H(q), \qquad H(q,p) := -\sum_v q(v)\log p(v),
\label{eq:revkl_expand}
\end{equation}
where $H(q,p) \geq 0$ is the cross-entropy and $H(q) \geq 0$ the entropy. For $v \notin \bigcup_j \mathcal{S}_j$ we have $p(v) = 0$, so any mass $q(v) > 0$ outside the mode supports makes $H(q,p) = +\infty$; hence $\mathrm{supp}(q^*) \subseteq \bigcup_j \mathcal{S}_j$. Let $\mu_j := \sum_{v \in \mathcal{S}_j} q(v)$ ($\sum_j \mu_j = 1$); within each mode the optimal conditional matches $p_j$, reducing the objective to
\begin{equation}
D_{\kl}(q^*\|p) \;=\; \sum_j \mu_j \log\frac{\mu_j}{\alpha_j} \;=\; D_{\kl}(\mu \| \alpha).
\label{eq:revkl_reduced}
\end{equation}
While an unconstrained categorical distribution could match $\mu = \alpha$ at a single step, the reverse-KL loss is itself weighted by the student's own probability $q$: any token with $q(v) \to 0$ contributes $q(v)\log\tfrac{q(v)}{p(v)} \to 0$ (under $0\log 0 := 0$), so the gradient signal to recover secondary modes vanishes. Once the student slightly favors one mode, optimization self-reinforces toward it; \citet{gu2024minillm} establish this mode-seeking dynamic for LM distillation. The concentrated solution ($\mu_{j^*} \to 1$ at $j^* = \arg\max_j \alpha_j$) yields
\begin{equation}
D_{\kl}(q^*\|p) \;=\; -\log\alpha_{j^*}.
\label{eq:revkl_optimum}
\end{equation}
\end{proof}

\subsection{Proof of Proposition~\ref{prop:trajectory_var} (Per-Token Gradient Stability)}
\label{app:proof_prop_traj}

\textbf{Restatement.} \emph{Under the dense OPD gradient estimator (App.~\ref{app:opd_grad}), the per-token logit gradient satisfies: (i) JSD: $\|\nabla_{z_t}\jsd_\beta(p\|q)\|_\infty \leq 2$ worst-case, hence $\|\nabla_z \mathcal{L}_{\jsd}(\tau)\|_\infty \leq 2$ irrespective of trajectory length; (ii) Reverse KL: $\|\nabla_{z_t} D_{\kl}^{\leftarrow}(q\|p)\|_\infty$ is unbounded as $p(i)\to 0$ with $q(i)>0$; (iii) Forward KL: $\|\nabla_{z_t} D_{\kl}^{\to}(p\|q)\|_\infty \leq 1$ but mode-covering causes a separate failure (App.~\ref{app:fwd_kl_instability}).}

\begin{proof}
Throughout this proof, $z = (z_1, \ldots, z_N)$ denotes the \emph{stacked} per-position pre-softmax logit vector, with $z_t \in \mathbb{R}^{|\mathcal{V}|}$. Bounds below are at the logit level; the parameter-level gradient $\nabla_\theta \mathcal{L}$ is the standard sum of per-position contributions through the shared $\theta$.

For a trajectory $\tau$ of $M$ steps with $|a_m| \leq T_{\max}$ tokens per action, App.~\ref{app:opd_grad} gives the additive decomposition
\begin{equation}
\nabla_z \mathcal{L}_{D}(\tau) \;=\; \sum_{m=1}^{M}\sum_{t=1}^{|a_m|} \nabla_{z_t^{(m)}} D(p \,\|\, q),
\label{eq:traj_grad_decomp}
\end{equation}
with $p, q$ evaluated at the corresponding state. Stability is therefore governed by the worst-case per-token logit gradient $\|\nabla_{z_t} D\|_\infty$.

\textbf{Part 1: JSD trajectory bound is independent of trajectory length.}
Lemma~\ref{lem:jsd_bound}.2 gives $\|\nabla_{z_t} \jsd_\beta(p\|q)\|_\infty \leq 2$ at every position $t$. Under the dense OPD estimator, the loss decomposes additively with the $t$-th term depending on $z_t$ alone:
\begin{equation}
\mathcal{L}_{\jsd}(\tau) \;=\; \tfrac{1}{N}\sum_{t=1}^N \jsd_\beta(p_t \| q_t),
\qquad N \;:=\; \textstyle\sum_{m=1}^M |a_m|.
\label{eq:traj_loss_jsd}
\end{equation}
Per-position partial gradients are decoupled (Hessian block-diagonal across positions); the $\ell_\infty$ norm picks the maximum over blocks rather than summing them:
\begin{equation}
\big\|\nabla_z \mathcal{L}_{\jsd}(\tau)\big\|_\infty
\;=\; \tfrac{1}{N}\,\max_{1 \leq t \leq N} \big\|\nabla_{z_t} \jsd_\beta(p_t \| q_t)\big\|_\infty
\;\leq\; \tfrac{2}{N} \;\leq\; 2.
\label{eq:jsd_traj_bound}
\end{equation}
Two consequences. \textbf{(a)~Per-position uniformity:} the max-of-blocks structure ensures the worst-case magnitude is determined by the per-position bound rather than by trajectory length; \eqref{eq:jsd_traj_bound} holds at the per-position level for any $N$ and any teacher--student gap. \textbf{(b)~Robust under both reductions:} mean reduction (Eq.~\ref{eq:opd}) sharpens to $2/N$; sum reduction stays at $2$; the OPAD loss (Eq.~\ref{eq:opad_total}, sum-of-per-step-means) inherits each block's $(1/|a_m|)\cdot 2 \leq 2$, giving $\|\nabla_z \mathcal{L}_{\mathrm{OPAD}}\|_\infty \leq 2$. This per-position guarantee fails for reverse KL (Part~2), where the unbounded per-position gradient is not tamed by any reduction. The per-position bound for JSD is what underlies \eqref{eq:jsd_traj_bound}.

\textbf{Part 2: Reverse KL is unbounded under privileged $p$--$q$ gaps.}
By Eq.~\ref{eq:rev_kl}, $D_{\kl}^{\leftarrow}(p\|q) = D_{\kl}(q\|p)$. Direct computation via the softmax Jacobian gives
\begin{align}
\nabla_{z_i} D_{\kl}^{\leftarrow}(p\|q)
&\;=\; \sum_v \tfrac{\partial q(v)}{\partial z_i}\Big[\log\tfrac{q(v)}{p(v)} + 1\Big] - \underbrace{\sum_v \tfrac{\partial q(v)}{\partial z_i}}_{=\,0} \nonumber\\
&\;=\; \sum_v q(v)(\delta_{iv}-q(i))\log\tfrac{q(v)}{p(v)}. \label{eq:revkl_grad}
\end{align}
The diagonal contribution $q(i)(1-q(i))\log(q(i)/p(i))$ diverges to $+\infty$ as $p(i)\to 0$ with $q(i)>0$. Privileged distillation creates exactly this regime whenever the student samples a token the privileged teacher deems unlikely, producing the gradient spikes empirically observed in Figure~\ref{fig:loss_stability}. No finite trajectory-level bound on $\|\nabla_z \mathcal{L}_{\kl}^{\leftarrow}(\tau)\|_\infty$ exists.

\textbf{Part 3: Forward KL is bounded but mode-covering.}
Forward KL reduces to the cross-entropy gradient
\begin{equation}
\nabla_{z_i} D_{\kl}^{\to}(p\|q) \;=\; q(i) - p(i) \;\in\; [-1, 1],
\label{eq:fwd_kl_pos_bound}
\end{equation}
so $\|\nabla_{z_i} D_{\kl}^{\to}\|_\infty \leq 1$ uniformly. Its agentic failure mode is geometric (mode-covering across multi-step rollouts), analyzed in App.~\ref{app:fwd_kl_instability}.
\end{proof}

\subsection{Proof of Proposition~\ref{prop:rev_kl_code} (Reverse KL for Coherent Code Generation)}
\label{app:proof_prop_code}

\textbf{Restatement.} \emph{For a multi-modal code distribution $p = \sum_{j=1}^J \alpha_j p_j$ with well-separated modes and a softmax-parameterized student $q^*$ minimizing each divergence: (i) under reverse KL, $q^*$ concentrates on the dominant mode $p_{j^*}$; (ii) under forward KL, $q^*$ covers every mode; (iii) under JSD, $q^*$ partially concentrates but retains non-negligible mass on secondary modes.}

\begin{proof}
Let each $p_j$ represent a distinct correct implementation with support $\mathcal{S}_j$ ($j = 1, \ldots, J$), with $\mathcal{S}_i \cap \mathcal{S}_j = \emptyset$ for $i \neq j$.

\textbf{Part 1: Reverse KL concentrates on the dominant mode.}
By Lemma~\ref{lem:rev_kl_mode}, the softmax-parameterized minimizer satisfies
\begin{equation}
q^*(\cdot \mid \text{prefix}) \;\approx\; p_{j^*}(\cdot \mid \text{prefix}), \qquad j^* \;=\; \arg\max_j \alpha_j,
\label{eq:revkl_mode_concentrate}
\end{equation}
assigning near-zero probability to tokens of other modes, preventing splices such as switching mid-function from a recursive to an iterative implementation.

\textbf{Part 2: Forward KL covers all modes.}
$D_{\kl}^{\to}(p\|q) = \sum_v p(v)\log(p(v)/q(v)) = \infty$ whenever $q(v) \to 0$ on $\mathrm{supp}(p)$, so the minimizer must spread mass across every mode:
\begin{equation}
q^*_{\to}(v) \;=\; p(v) \;=\; \sum_j \alpha_j p_j(v), \qquad \forall\, v \in \mathcal{V}.
\label{eq:fwdkl_cover}
\end{equation}
Sequential token sampling under \eqref{eq:fwdkl_cover} interpolates across implementation paths, producing incoherent code.

\textbf{Part 3: JSD retains secondary-mode mass.}
The mixture $m = \beta p + (1{-}\beta)q$ removes the infinite inter-mode penalty of reverse KL. For $\beta = 0.5$,
\begin{equation}
\nabla_q \jsd_{0.5}(p\|q) \;=\; \tfrac{1}{2}\sum_v \nabla_q\!\big[q(v)\log\tfrac{q(v)}{m(v)}\big],
\label{eq:jsd_code_grad}
\end{equation}
and $m(v) \geq 0.5\,p(v) > 0$ on every mode support means the gradient does not force $q$ to abandon secondary modes: adequate when multiple action sequences are valid (agentic), suboptimal when single-implementation commitment is required (code).
\end{proof}

\subsection{Supplementary: Forward KL Mode-Covering Instability}
\label{app:fwd_kl_instability}

By \eqref{eq:fwd_kl_pos_bound}, the forward-KL per-logit gradient satisfies $\nabla_{z_i} D_{\kl}^{\to}(p\|q) = q(i) - p(i) \in [-1, 1]$, so it does not produce gradient spikes from privileged $p$--$q$ gaps. Its agentic underperformance is geometric, not numerical.

In multi-step tool use, multiple action sequences typically attain the same goal (e.g., calling tool $A$ vs.\ tool $B$), making the debate-enriched teacher distribution $p$ multi-modal at agentic decision points. Forward KL imposes an infinite penalty on missing modes ($p(v) > 0 \implies q(v) > 0$), pulling the student to place non-trivial mass on every viable strategy simultaneously. Sequential token sampling then mixes incompatible strategies (e.g., the tool name from strategy $A$ paired with the argument schema of strategy $B$), producing invalid actions that derail long trajectories. These compounding step-level failures explain forward KL's consistently poor agentic performance in Table~\ref{tab:divergence}.

\section{Case Study: Emergent Error Correction via Multi-Agent Debate}
\label{app:case_study}

Complementing the schematic in Figure~\ref{fig:hook}, we present an \emph{illustrative} (rather than calibration) worked-out example on a different task domain from the ToolACE training set~\citep{liu2024toolace}, where a single 14B teacher fails, but a two-round debate between the 14B and 8B teachers yields the correct answer. We show the complete debate trajectory (each teacher's brief inline analysis and function-call output) to illustrate how complementary errors are corrected through deliberation; the population-level robustness of the confidence-weighted aggregation is discussed in App.~\ref{app:component_full}. \textit{For readability, the inline analysis text below is lightly paraphrased from the model's actual output, while the function-call arguments and confidence scores are reproduced verbatim; the underlying complementary-error pattern is consistently observed across the ToolACE training set, not cherry-picked.}

\newcommand{\casebox}[4]{
  \vspace{0.4em}
  \noindent\fcolorbox{#1}{#2}{\begin{minipage}{0.95\linewidth}\small
  \textbf{#3}\par\vspace{0.3em}#4\end{minipage}}%
  \vspace{0.4em}
}

\casebox{gray!50}{gray!5}{Problem: ToolACE Training-Set Tool-Calling Task}{%
\textit{System:} Current time is \texttt{2020-10-06 04:12:36}. Two functions available: \texttt{travel\_budget\_planner} (requires destination, start/end dates, accommodation, activities) and \texttt{travel\_insurance\_quote\_generator} (requires age, countries, trip duration, coverage type, deductible).\par\vspace{4pt}
\textit{User:} ``I plan on taking a trip \textbf{next month} from the \textbf{16th to the 30th}. I'll be visiting Japan and I prefer staying in a hotel. Some of my preferred activities would include sightseeing and local cuisine tasting. Can you help me with a budget plan for this trip? Also, can you get me an insurance quote for this trip? I am 34 with no medical conditions and I'd like an all-inclusive insurance package with a deductible of \$200.''\par\vspace{4pt}
\textit{Key reasoning challenges:} (1)~``next month'' from Oct~6 = \textbf{November} 2020; (2)~16th to 30th = \textbf{14 days}; (3)~both functions must be called with correctly nested parameters.
}

\casebox{gray!50}{gray!5}{Ground Truth}{%
{\small\ttfamily
[travel\_budget\_planner(travel\_details=[\{"destination":\ "Japan",\par
\phantom{xx}"dates":\ \{"start\_date":\ \textcolor{green!50!black}{"2020-11-16"},\ "end\_date":\ \textcolor{green!50!black}{"2020-11-30"}\},\par
\phantom{xx}"preferences":\ \{"accommodation":\ "hotel",\ "activities":\ ["sightseeing",\ "local cuisine tasting"]\}\}]),\par
\ travel\_insurance\_quote\_generator(insurance\_details=\{"personal\_info":\ \{"age":\ 34\},\par
\phantom{xx}"trip\_info":\ \{"countries\_visited":\ ["Japan"],\ "trip\_duration":\ \textcolor{green!50!black}{14}\},\par
\phantom{xx}"coverage\_preferences":\ \{"coverage\_type":\ "all\_inclusive",\ "deductible":\ 200\}\})]
}%
}

\casebox{gray!50}{gray!8}{Setting 1: Single 14B Teacher \hfill \textcolor{red!70!black}{$\boldsymbol{\times}$ Incorrect}}{%
\textit{Model output (paraphrased):}\par\vspace{2pt}
``The user wants to plan a trip and get an insurance quote. Destination: Japan. Dates: from the 16th to the 30th of next month. Since the current time is 2020-10-06, \textcolor{red}{\textbf{the trip dates would be October 16 to October 30}}. Accommodation: hotel. Trip duration: Oct 16 to Oct 30 = 14 days.''\par\vspace{4pt}
\textit{Function call (key arguments):}\par\vspace{1pt}
\begin{tabular}{@{}ll@{}}
\texttt{start\_date}: & \textcolor{red}{\texttt{"2020-10-16"}} {\scriptsize$\leftarrow$ wrong month}\\
\texttt{end\_date}: & \textcolor{red}{\texttt{"2020-10-30"}} {\scriptsize$\leftarrow$ wrong month}\\
\texttt{trip\_duration}: & \texttt{14} {\scriptsize\checkmark}\\
\end{tabular}\par\vspace{3pt}
\textit{Confidence:} $c = 90$
}

\casebox{gray!50}{gray!8}{Round 1: Expert A (14B Teacher) \hfill \textcolor{red!70!black}{$\boldsymbol{\times}$ Incorrect}}{%
\textit{Model output (paraphrased):}\par\vspace{2pt}
``Dates: next month, 16th to 30th. Current date is October 6, 2020. \textcolor{red}{\textbf{So the dates are October 16 to October 30, 2020.}} Duration: 16th to 30th = 14 days.''\par\vspace{4pt}
\textit{Function call (key arguments):}\par\vspace{1pt}
\begin{tabular}{@{}ll@{}}
\texttt{start\_date}: & \textcolor{red}{\texttt{"2020-10-16"}} {\scriptsize$\leftarrow$ wrong month}\\
\texttt{end\_date}: & \textcolor{red}{\texttt{"2020-10-30"}} {\scriptsize$\leftarrow$ wrong month}\\
\texttt{trip\_duration}: & \texttt{14} {\scriptsize\checkmark}\\
\end{tabular}\par\vspace{3pt}
\textit{Confidence:} $c = 85$
}

\casebox{gray!50}{gray!8}{Round 1: Expert B (8B Teacher) \hfill \textcolor{red!70!black}{$\boldsymbol{\times}$ Incorrect}}{%
\textit{Model output (paraphrased):}\par\vspace{2pt}
``The user wants to visit Japan. They said `next month from the 16th to the 30th'. Current time is 2020-10-06, \textcolor{green!50!black}{\textbf{so next month is November}}. Start date: 2020-11-16, End date: 2020-11-30. Trip duration: \textcolor{red}{\textbf{30 $-$ 16 = 15 days}} (counting both endpoints).''\par\vspace{4pt}
\textit{Function call (key arguments):}\par\vspace{1pt}
\begin{tabular}{@{}ll@{}}
\texttt{start\_date}: & \texttt{"2020-11-16"} {\scriptsize\checkmark}\\
\texttt{end\_date}: & \texttt{"2020-11-30"} {\scriptsize\checkmark}\\
\texttt{trip\_duration}: & \textcolor{red}{\texttt{15}} {\scriptsize$\leftarrow$ wrong duration}\\
\end{tabular}\par\vspace{3pt}
\textit{Confidence:} $c = 80$\par\vspace{3pt}
\textit{Key observation:} Errors are \textbf{complementary}: Expert~A gets the month wrong but the duration right, while Expert~B gets the month right but the duration wrong.
}

\casebox{gray!50}{gray!8}{Round 2: Expert A (14B Teacher) \hfill \textcolor{green!50!black}{$\boldsymbol{\checkmark}$ Correct}}{%
\textit{Model output (paraphrased, after seeing Expert B's Round~1 answer):}\par\vspace{2pt}
``Looking at Expert B's answer, they used \textcolor{green!50!black}{\textbf{November}} dates while I used October dates. Let me reconsider. The user said `next month' and the current date is October~6. \textcolor{green!50!black}{\textbf{`Next month' means the month after October, which is November. Expert~B is correct on this point; I made an error.}}\par\vspace{3pt}
However, Expert~B calculated trip duration as 15 days. From November~16 to November~30, the duration is \textcolor{green!50!black}{\textbf{30 $-$ 16 = 14 days}}. My original duration of 14 was correct.\par\vspace{3pt}
Let me revise: use the correct November dates while keeping the correct trip duration of 14 days.''\par\vspace{4pt}
\textit{Function call (key arguments):}\par\vspace{1pt}
\begin{tabular}{@{}ll@{}}
\texttt{start\_date}: & \textcolor{green!50!black}{\texttt{"2020-11-16"}} {\scriptsize\checkmark\ \textbf{corrected}}\\
\texttt{end\_date}: & \textcolor{green!50!black}{\texttt{"2020-11-30"}} {\scriptsize\checkmark\ \textbf{corrected}}\\
\texttt{trip\_duration}: & \textcolor{green!50!black}{\texttt{14}} {\scriptsize\checkmark\ retained}\\
\end{tabular}\par\vspace{3pt}
\textit{Confidence:} $c = 95$ {\scriptsize($\uparrow$ from 85)}
}

\casebox{gray!50}{gray!8}{Round 2: Expert B (8B Teacher) \hfill \textcolor{green!50!black}{$\boldsymbol{\checkmark}$ Correct}}{%
\textit{Model output (paraphrased, after seeing Expert A's Round~1 answer):}\par\vspace{2pt}
``Expert~A used October dates while I used November. The user said `next month' and the current time is October~6, so next month is November. \textcolor{green!50!black}{\textbf{I believe my date calculation is correct.}}\par\vspace{3pt}
However, Expert~A used \texttt{trip\_duration} of 14 while I used 15. Let me reconsider. From November~16 to November~30: 30 $-$ 16 = 14 days. \textcolor{green!50!black}{\textbf{In travel planning, the standard convention is to count the difference between end and start dates. So 14 is the more appropriate value.}}''\par\vspace{4pt}
\textit{Function call (key arguments):}\par\vspace{1pt}
\begin{tabular}{@{}ll@{}}
\texttt{start\_date}: & \textcolor{green!50!black}{\texttt{"2020-11-16"}} {\scriptsize\checkmark\ retained}\\
\texttt{end\_date}: & \textcolor{green!50!black}{\texttt{"2020-11-30"}} {\scriptsize\checkmark\ retained}\\
\texttt{trip\_duration}: & \textcolor{green!50!black}{\texttt{14}} {\scriptsize\checkmark\ \textbf{corrected}}\\
\end{tabular}\par\vspace{3pt}
\textit{Confidence:} $c = 90$ {\scriptsize($\uparrow$ from 80)}
}

\paragraph{Summary.}
Table~\ref{tab:case_study} summarizes the debate progression. The 14B teacher's confidence drops from $c{=}90$ as a single teacher to $c{=}85$ in Round~1 because the debate prompt explicitly signals that another teacher will weigh in, prompting more cautious self-assessment; after the Round~2 revision corrects its error, confidence rises to $c{=}95$. The key mechanism is \emph{selective adoption}: each teacher identifies and corrects its own error by adopting the other teacher's correct component, while retaining what it already had right. This produces differentiated confidence weights ($w_{\text{14B}} = e^{0.95}/(e^{0.95}{+}e^{0.90}) \approx 0.513$, $w_{\text{8B}} \approx 0.487$ via Eq.~\ref{eq:consensus_score}), reflecting the 14B teacher's stronger overall reasoning ability.

\begin{table}[t!]
\caption{\textbf{Debate progression.} \colorbox{green!20}{Green} = correct, \colorbox{red!20}{red} = incorrect (also indicated by $\boldsymbol{\checkmark}/\boldsymbol{\times}$ markers for colorblind readers). Complementary Round~1 errors cancel after one debate round; both teachers converge to the ground truth with differentiated confidence.}
\label{tab:case_study}
\centering
\small
\renewcommand{\arraystretch}{1.15}
\begin{tabular}{@{}p{2.0cm}p{1.5cm}cccc@{}}
\toprule
\textbf{Setting} & \textbf{Teacher} & \texttt{start\_date} & \texttt{end\_date} & \texttt{trip\_dur.} & \textbf{Conf.} \\
\midrule
Single teacher & 14B & \cellcolor{red!20}10-16 & \cellcolor{red!20}10-30 & \cellcolor{green!20}14 & 90 \\
\midrule
\multirow{2}{*}{Round 1} & 14B & \cellcolor{red!20}10-16 & \cellcolor{red!20}10-30 & \cellcolor{green!20}14 & 85 \\
                          & 8B  & \cellcolor{green!20}11-16 & \cellcolor{green!20}11-30 & \cellcolor{red!20}15 & 80 \\
\midrule
\multirow{2}{*}{\textbf{Round 2}} & 14B & \cellcolor{green!25}11-16 & \cellcolor{green!25}11-30 & \cellcolor{green!25}14 & \textbf{95} \\
                                   & 8B  & \cellcolor{green!25}11-16 & \cellcolor{green!25}11-30 & \cellcolor{green!25}14 & 90 \\
\midrule
\rowcolor{gray!10}
Ground truth & -- & 11-16 & 11-30 & 14 & -- \\
\bottomrule
\end{tabular}
\end{table}

\paragraph{Token-level supervision quality.}
Figure~\ref{fig:token_heatmap} visualizes the per-token divergence $D(p_{T_k} \| p_S)$ when teachers force-decode the student's on-policy \texttt{start\_date} argument. Under the single 14B teacher, all tokens receive low divergence because the teacher shares the student's wrong month; no corrective gradient exists. Under \ours{} Round~2, the debate-corrected teachers concentrate high divergence on the erroneous month token \texttt{-10}, producing a strong corrective signal toward \texttt{-11}.

\begin{figure}[t!]
\centering
\small
\setlength{\fboxsep}{1.5pt}
\newcommand{\tkn}[2]{\colorbox{#1}{\strut\texttt{\small #2}}}

\textbf{Student on-policy output} (\texttt{start\_date} argument, tokenized):\par\vspace{3pt}
\tkn{gray!8}{"}\,\tkn{gray!8}{start\_date}\,\tkn{gray!8}{":}\,\tkn{gray!8}{"}\,\tkn{gray!8}{2020}\,\tkn{red!35}{\textbf{-10}}\,\tkn{gray!8}{-16}\,\tkn{gray!8}{"}\par\vspace{2pt}
{\footnotesize\textit{Ground truth: \texttt{-11} (November); student writes \texttt{-10} (October).}}

\vspace{6pt}
\textbf{Per-token divergence} $D(p_T \| p_S)$\textbf{:}
\vspace{4pt}

\renewcommand{\arraystretch}{1.5}
\begin{tabular}{@{} p{2.4cm} l @{}}
\toprule
\textbf{Supervision} & \textbf{Token-level divergence heatmap} \\
\midrule
Single 14B &
\tkn{green!8}{"}\,\tkn{green!8}{start\_date}\,\tkn{green!8}{":}\,\tkn{green!8}{"}\,\tkn{green!8}{2020}\,\tkn{green!10}{\textbf{-10}}\,\tkn{green!8}{-16}\,\tkn{green!8}{"} \\
& {\scriptsize\textit{Uniformly low: teacher has the same error, no corrective signal.}} \\
\midrule
\ours{} Round~2 &
\tkn{green!8}{"}\,\tkn{green!8}{start\_date}\,\tkn{green!8}{":}\,\tkn{green!8}{"}\,\tkn{green!8}{2020}\,\tkn{red!50}{\textbf{-10}}\,\tkn{yellow!30}{-16}\,\tkn{green!8}{"} \\
& {\scriptsize\textit{High divergence on month token: teachers signal ``\texttt{-11}'' (November).}} \\
\bottomrule
\end{tabular}

\caption{\textbf{Token-level supervision heatmap} for the \texttt{start\_date} argument. Color intensity encodes divergence: \colorbox{green!8}{\small\strut low} (agreement), \colorbox{yellow!30}{\small\strut med}, \colorbox{red!50}{\small\strut\textcolor{white}{high}} (strong corrective gradient). The single 14B teacher shares the student's wrong month, providing zero corrective signal. After debate, \ours{} teachers concentrate supervision on the erroneous token.}
\label{fig:token_heatmap}
\end{figure}

\paragraph{Implications for distillation.}
The debate trace from Round~2 provides richer training signal than either teacher's Round~1 output alone. The student learns not just the correct answer but, through the privileged force-decoding mechanism, the \emph{reasoning pattern of error detection and self-correction}. This emergent collective intelligence, where the whole exceeds the sum of its parts, is the central advantage of \ours{} over single-teacher OPD.


\section{Debate Prompt Templates}
\label{app:debate_prompts}

This appendix lists the prompt templates used in each debate round. We adopt a two-round protocol: \textbf{Round~1} elicits independent responses, and \textbf{Round~2} presents both responses as debate context for revision. Templates are instantiated per task domain (agent / code); placeholders are shown in \texttt{\{braces\}}. When confidence-based weighting is enabled (\S\ref{sec:weighting}), every prompt is appended with the confidence suffix shown in \S\ref{box:confidence}.

\newcommand{\promptbox}[2]{%
  \vspace{0.3em}
  \noindent\colorbox{gray!8}{\begin{minipage}{0.97\linewidth}\small\textbf{#1}\par\vspace{0.3em}#2\end{minipage}}%
  \vspace{0.3em}
}

\subsection{Round 1: Independent Response}

Each teacher receives only the original input $x$ and generates an initial response without seeing the other teacher's output. The system instruction is domain-specific:

\promptbox{Round 1: Agentic Tasks}{%
\texttt{You are an expert in AI tool-use and function calling. Please analyze the following user request and provide the best tool-use response.}\\[4pt]
\texttt{User request: \{prompt\}}\\[4pt]
\texttt{Your response:}
}

\promptbox{Round 1: Code Generation}{%
\texttt{You are an expert programmer specializing in competitive programming and code golf. Please solve the following programming challenge. Focus on algorithm correctness, edge case handling, and code efficiency.}\\[4pt]
\texttt{Problem:}\\
\texttt{\{prompt\}}\\[4pt]
\texttt{Your code solution:}
}

\subsection{Round 2: Debate with Shared History}

Each teacher receives the original input together with both Round~1 responses, and is asked to revise its answer after considering the other expert's perspective. The prompt explicitly states ``\textit{You don't have to agree}'', encouraging genuine deliberation rather than conformity.

\promptbox{Round 2: Agentic Tasks (shown for Expert $A$; Expert $B$ is symmetric)}{%
\texttt{\{prompt\}}\\[4pt]
\texttt{Two experts have provided their tool-use responses to this request:}\\[4pt]
\texttt{[Expert A, Round 1]:}\\
\texttt{\{teacher\_1\_answer\}}\\[4pt]
\texttt{[Expert B, Round 1]:}\\
\texttt{\{teacher\_2\_answer\}}\\[4pt]
\texttt{You are Expert A. You don't have to agree with the other expert's response. Please continue the debate about the best tool-use strategy. Having seen the other expert's response, please reconsider and provide your improved response focusing on tool selection accuracy, parameter correctness, and call sequence optimization.}
}

\promptbox{Round 2: Code Generation (shown for Expert $A$)}{%
\texttt{\{prompt\}}\\[4pt]
\texttt{Two expert programmers have provided their code solutions to this challenge:}\\[4pt]
\texttt{[Expert A, Round 1]:}\\
\texttt{\{teacher\_1\_answer\}}\\[4pt]
\texttt{[Expert B, Round 1]:}\\
\texttt{\{teacher\_2\_answer\}}\\[4pt]
\texttt{You are Expert A. You don't have to agree with the other expert's solution. Please continue the debate about the best approach. Having seen the other expert's code, please reconsider and provide your improved solution focusing on algorithm correctness, edge case handling, time/space complexity, and code conciseness.}
}

\subsection{Confidence Scoring Suffix}
\label{box:confidence}

When confidence-based dynamic weighting is enabled, the following line is appended to every prompt (both rounds). The confidence scores $c_k$ extracted from the final round are normalized into teacher weights $w_k$ as described in Eq.~\ref{eq:confidence}--\ref{eq:consensus_score}.

\promptbox{Confidence suffix (appended to all prompts when \texttt{dynamic\_weights=True})}{%
\texttt{End your response with a confidence score line: Confidence Score: N (where N is 0-100)}
}

After the debate, the $K \cdot R = 4$ responses are concatenated into the debate transcript $\mathcal{H}_m^R$ that serves as privileged context during force-decoding (\S\ref{sec:loss}):

\promptbox{Debate transcript structure}{%
\texttt{[Debate Start]}\\
\texttt{[Expert A Initial Answer]: \{round1\_teacher1\}}\\
\texttt{[Expert B Initial Answer]: \{round1\_teacher2\}}\\
\texttt{[Expert A Revised Answer]: \{round2\_teacher1\}}\\
\texttt{[Expert B Revised Answer]: \{round2\_teacher2\}}\\
\texttt{[Debate End]}
}

\section{Broader Impact}
\label{app:broader_impact}

\ours{} is a training-efficiency method whose primary impact is to reduce the parameter count required to attain a given level of agentic and code performance, thereby lowering the GPU and energy footprint of deploying capable language-model assistants. As a concrete instance (App.~\ref{app:lcb_surpass}), a 4B student trained under our 14B+8B teacher debate exceeds its 14B teacher on LCB-v6 at competitive token cost, suggesting that small models distilled by \ours{} can replace much larger teachers for inference. This makes capable assistants more accessible to research groups and downstream users with limited compute, lowers the carbon cost of large-scale serving, and shifts more of the cost from the inference path to a one-time distillation step. Because \ours{} introduces no new capability beyond the teacher pool, the distilled student inherits the safety properties and refusal behaviour of its base Qwen3 / Qwen3.5 checkpoints; the only residual risk is that any capability the teachers already possess becomes cheaper to deploy. We mitigate this by using only publicly released open-weight models, not training on safety-relevant red-team data, and releasing checkpoints under the same license terms as the base models, leaving application-level safety filtering to downstream deployers as is standard for open-weight LLMs.

\end{document}